\newtheorem{theorem}{Theorem}[section]
\newtheorem{lemma}[theorem]{Lemma}
\newtheorem{definition}[theorem]{Definition}
\newtheorem{corollary}[theorem]{Corollary}
\newtheorem{remark}{Remark}
\newtheorem{assumption}{Assumption}
\newcommand{\bel}{\begin{easylist}[itemize]}
\newcommand{\eel}{\end{easylist}}
\newcommand{\el@aux}[1]{\begin{easylist}[itemize] #1 \end{easylist}\endgroup}
\newcommand{\el}{\begingroup\Activate\el@aux}
\newcommand{\eln@aux}[1]{\begin{easylist}[enumerate] #1 \end{easylist}\endgroup}
\newcommand{\eln}{\begingroup\Activate\eln@aux}
\newcommand{\one}{\textbf{1}}
\newcommand{\bt}[1]{\textbf{#1}}
\newcommand{\itt}[1]{\textit{#1}}
\newcommand{\paren}[1]{\left(#1\right)}
\newcommand{\eq}[1]{\begin{equation*}\begin{aligned}#1\end{aligned}\end{equation*}}
\newcommand{\eqn}[1]{\begin{equation}\begin{aligned}#1\end{aligned}\end{equation}}
\newcommand{\psd}{\succeq}
\DeclareMathOperator{\tr}{\bt{tr}}
\newcommand{\half}{\frac{1}{2}}
\newcommand{\reals}{\mathbb{R}}
\newcommand{\kldist}[2]{D\! \left( #1\|#2 \right)}
\newcommand*{\vertbar}{\rule[-1ex]{0.5pt}{2.5ex}}
\newcommand{\tB}{\tilde{B}}
\newcommand{\cB}{\mathcal{B}}
\newcommand{\cD}{\mathcal{D}}
\newcommand{\cJ}{\mathcal{J}}
\newcommand{\cU}{\mathcal{U}}
\newcommand{\cX}{\mathcal{X}}
\newcommand{\bE}{\mathbb{E}}
\newcommand{\bN}{\mathbb{N}}
\newcommand{\bR}{\mathbb{R}}
\newcommand{\aP}{\alphabet{P}}
\newcommand{\aW}{\alphabet{W}}
\newcommand{\parenth}[1] {\left(#1\right)}
\newcommand{\braces}[1] {\left\{#1\right\}}
\newcommand{\brackets}[1] {\left[#1\right]}
\newcommand{\abs}[1] {\left|#1\right|}
\newcommand{\bitm}{\begin{itemize}}
\newcommand{\eitm}{\end{itemize}}
\newcommand{\benum}{\begin{enumerate}}
\newcommand{\eenum}{\end{enumerate}}
\newcommand{\beqa}{\begin{eqnarray}}
\newcommand{\eeqa}{\end{eqnarray}}
\newcommand{\beqas}{\begin{eqnarray*}}
\newcommand{\eeqas}{\end{eqnarray*}}
\newcommand{\alphabet}[1] { {\mathsf #1}}
\newcommand{\probSimplex}[1]{ \alphabet{P}\parenth{\alphabet{#1}}}
\newcommand{\jac}{J}
\newcommand{\detSysx}{\det\parenth{\jac_{S^*_{(y)}(x)}}}
\def\argmin{\mathop{\arg\,\min}\limits}%
\newcommand{\MD}{\mathcal{M}\mathcal{D}}
\renewcommand{\bE}{\mathbb{E}}
\newcommand{\D}{\cD}                              %
\renewcommand{\MD}{\D_+}                %
\newcommand{\KRMD}{\cD_+^{KR}}          %
\newcommand{\vecj}{\mathbf{j}}              %
\newcommand{\veci}{\mathbf{i}}              %
\newcommand{\vecx}{\mathbf{x}}              %
\newcommand{\weightRow}{\bm{w}}                           %
\newcommand{\captionfonts}{\normalsize}
\long\def\@makecaption#1#2{%
  \vskip\abovecaptionskip
  \sbox\@tempboxa{{\captionfonts #1: #2}}%
  \ifdim \wd\@tempboxa >\hsize
    {\captionfonts #1: #2\par}
  \else
    \hbox to\hsize{\hfil\box\@tempboxa\hfil}%
  \fi
  \vskip\belowcaptionskip}
\begin{document}
\hspace{13.9cm}1

\ \vspace{15mm}\\

{\LARGE A Distributed Framework for the Construction of Transport Maps}

\ \\
{\bf \large Diego A. Mesa$^{\displaystyle 1 *}$}\\
{\bf \large Justin Tantiongloc$^{\displaystyle 2 *}$}\\
{\bf \large Marcela Mendoza$^{\displaystyle 3 *}$}\\
{\bf \large Todd P. Coleman$^{\displaystyle 3}$}\\\\
{* These authors contributed equally to this work.}\\
{$^{\displaystyle 1}$ Departments of Electrical Engineering and Computer Science (EECS) and Biomedical Informatics (DBMI), Vanderbilt University, Nashville, TN 37205, U.S.A.}\\
{$^{\displaystyle 2}$ Department of Computer Science and Engineering, University of California, San Diego, La Jolla, CA 92093, U.S.A.}\\
{$^{\displaystyle 3}$ Department of Bioengineering, University of California, San Diego, La Jolla, CA 92093, U.S.A.}\\

{\bf Keywords:} parallelized computation, convex optimization, machine learning, relative entropy, optimal transport theory, Bayesian inference, generative modeling

\thispagestyle{empty}
\markboth{}{NC instructions}
\ \vspace{-0mm}\\
\begin{center} {\bf Abstract} \end{center}
The need to reason about uncertainty in large, complex, and multi-modal datasets has become increasingly common across modern scientific environments.  The ability to transform samples from one distribution $P$ to another distribution $Q$ enables the solution to many problems in machine learning (e.g. Bayesian inference, generative modeling) and has been actively pursued from theoretical, computational, and application perspectives across the fields of information theory, computer science, and biology.  Performing such transformations, in general, still leads to computational difficulties, especially in high dimensions.   Here, we consider the problem of computing such ``measure transport maps'' with efficient and parallelizable methods.  Under the mild assumptions that $P$ need not be known but can be sampled from, and that the density of $Q$ is known up to a proportionality constant, and that $Q$ is log-concave, we provide in this work a convex optimization problem pertaining to relative entropy minimization.  We show how an empirical minimization formulation and polynomial chaos map parameterization can allow for learning a transport map between $P$ and $Q$ with distributed and scalable methods.   We also leverage findings from nonequilibrium thermodynamics to represent the transport map as a composition of simpler maps, each of which is learned sequentially with a transport cost regularized version of the aforementioned problem formulation.   We provide examples of our framework within the context of Bayesian inference for the Boston housing dataset and generative modeling for handwritten digit images from the MNIST dataset.
\section{Introduction}
\label{seca:introduction}
While scientific problems of interest continue to grow in size and complexity, managing uncertainty is increasingly paramount.  As a result, the development and use of theoretical and numerical methods to reason in the face of
uncertainty, in a manner that can accommodate large datasets,  has been the focus of sustained
research efforts in statistics, machine learning, information theory and computer
science.   The ability to construct a mapping which transforms samples from one distribution $P$ to another distribution $Q$ enables the solution to many problems in machine learning.  

One such problem is Bayesian inference, \citep{gelman2014bayesian,bernardo2001bayesian,sivia2006data}, where a latent signal of interest is observed through noisy observations.  Fully characterizing the posterior distribution is in general notoriously challenging, due to the need to calculate the normalization constant pertaining to the posterior density.  Traditionally, point estimation procedures are used, which obviate the need for this calculation, despite their inability to quantify uncertainty.  Generating samples from the posterior distribution enables approximation of any conditional expectation, but this is typically performed with Markov Chain Monte Carlo (MCMC) methods \citep{gilks2005markov,andrieu2003introduction, hastings1970monte,
geman1984stochastic, Liu2008} despite the following drawbacks: (a) the convergence rates and mixing
times of the Markov chain are generally unknown, thus leading to practical
shortcomings like ``sample burn in'' periods; and (b) the samples generated are
necessarily correlated, lowering effective sample sizes and propagating errors
throughout estimates \citep{robert2004monte}.  If we let  $P$  be the prior distribution and $Q$ the posterior distribution for Bayesian inference , then an algorithm which can transform independent samples from $P$ to $Q$, without knowledge of the normalization constant in the density of $Q$, enables calculation of any conditional expectation with fast convergence. 

As another example, generative modeling problems entail observing a large dataset with samples from an unknown distribution $P$ (in high dimensions) and attempting to learn a representation or model so that new independent samples from $P$ can be generated.   Emerging approaches to generative modeling rely on the use of deep neural networks and include variational autoencoders \citep{kingma2013auto}, generative adversarial networks \citep{goodfellow2014generative} and their derivatives \citep{li2015generative}, and auto-regressive neural networks \citep{larochelle2011neural}. These models have led to impressive results in a number of applications, but their tractability and theory are still not fully developed.  If $P$ can be transformed into a known and well-structured distribution $Q$ (e.g. a multivariate standard Gaussian), then the inverse of the transformation  can be used to transform new independent samples from $Q$ into new samples from $P$.

While these issues relate to the functional attractiveness of the ability to characterize and sample from non-trivial distributions, there is also the issue of computational efficiency.  There continues to be an ongoing upward trend of the availability of distributed and hardware-accelerated computational resources.  As such, it would be especially valuable to develop solutions to these problems that are not only satisfactory in a functional sense, but are also capable of taking advantage of the ever-increasing scalability of parallelized computational capability.

\subsection{Main Contribution} 
\label{sub:main_contribution}

The main contribution of this work is to extend our previous results on finding transport maps to provide
a more general transport-based push-forward theorem for pushing independent samples from a distribution $P$ to independent samples from a
distribution $Q$. Moreover, we show how given only independent samples from $P$, knowledge of $Q$ up to a normalization constant, and under the traditionally mild assumption
of the log-concavity of $Q$, it can be carried out in a \itt{distributed} and
\itt{scalable} manner, leveraging the technique of alternating direction method
of multipliers (ADMM) \citep{boyd2011distributed}.   We also 
leverage variational principles from nonequilibrium thermodynamics \citep{jordan1998variational} to represent a transport map as an aggregate composition of simpler maps, each of which minimizes a relative entropy along with a transport-cost-based regularization term.  Each map can be constructed with a complementary, ADMM-based formulation, resulting in the construction of a measure transport map smoothly and sequentially with applicability in high-dimensional settings.  

 Expanding on previous work on the real-world applicability of these general-purpose algorithms,  we showcase the implementation of a Bayesian LASSO-based analysis of the Boston Housing dataset \citep{harrison1978hedonic} and a high-dimensional example of using transport maps for generative modeling for the MNIST handwritten digits dataset \citep{lecun1998gradient}.

\subsection{Previous Work}
\label{subseca:previous_work}

A methodology for finding transport maps  based on ideas from  optimal transport
within the context of Bayesian inference was first proposed in
\citep{ElMoselhy2012} and expanded upon in conjunction with more traditional
MCMC-based sampling schemes in \citep{Marzouk2016,Parno2014, Parno2016,
Spantini2016}.

Our previous work used ideas from optimal transport theory to generalize the
posterior matching scheme, a mutual-information maximizing  scheme for feedback
signaling of a message point in arbitrary dimension
\citep{ma2014isit,ma2011generalizing, tantiongloc2017}. Building upon this, we considered a
relative entropy minimization formulation, as compared to what was developed in
\citep{ElMoselhy2012}, and showed that for the class of log-concave
distributions, this is a convex problem \citep{kim2013efficient}. We also
previously described a distributed framework \citep{mesa2015scalable} that we
expand upon here.

In the more traditional optimal transportation literature convex optimization
has been used to varying success in specialized cases \citep{Papadakis2013}, as
well as gradient-based optimization methods \citep{Rezende2015, Benamou2015a,
Benamou2015b}. The use of \textit{stochastic} optimization techniques in optimal
transport is also of current interest \citep{genevay2016stochastic}. In
contrast, our work below presents a specific distributed framework where
extensions to stochastic updating have been previously developed in a general
case. Incorporating them into this framework remains to be explored.

Additionally, there is much recent interest in the efficient and robust calculation of
Wasserstein \textit{barycenters} (center of mass) across partial empirical
distributions calculated over batches of samples
\citep{cuturi2014fast,claici2018stochastic}.  Wasserstein barycenters have also been applied to
Bayesian inference \citep{srivastava2015scalable}. While related, our work
focuses instead on calculating the \textit{full} empirical distribution through
various efficient parameterizations discussed below.

Building on much of this, there is growing interest in specific applications of
these transport problems in various areas
\citep{arjovsky2017wasserstein,tolstikhin2017wasserstein}. These
\textit{derived} transport problems are proving to be a fruitful alternative
approach and are the subject of intense research. The framework presented below
is general purpose and could benefit many of the derived transport problems.

Excellent introductory and references to the field can be found in 
\citep{villani2008optimal,santambrogio2015optimal}.

The rest of this paper is organized as follows: in \cref{seca:preliminaries}, we
provide some necessary definitions and background information; in
\cref{seca:distributed_push_forward}, we describe the distributed general
push-forward framework and provide several details on its construction and use;
in \cref{seca:sequential-composition}, we formulate a specialized version of the
objective specifically tailored for sequential composition; in
\cref{seca:applications}, we discuss  applications and examples of our
framework; and we provide concluding remarks in \cref{seca:discussion}.

\section{Preliminaries}
\label{seca:preliminaries}

In this section we make some preliminary definitions and provide background
information for the rest of this paper. %

\subsection{Definitions and Assumptions} 
\label{subseca:definitions}
Assume the space for sampling is given by $\aW \subset \reals^D$, a convex subset of $D$-dimensional Euclidean space.
Define the space of all probability measures on $\aW$ (endowed with the Borel sigma-algebra)
as $\probSimplex{\aW}$. If $P \in \probSimplex{W}$ admits a \emph{density} with
respect to the Lebesgue measure, we denote it as $p$.

\begin{assumption}
  \label{assmp:lebesgue}
  We assume that $P,Q \in \probSimplex{\aW}$ admit densities $p,q$ with respect
  to the Lebesgue measure.
\end{assumption}

This work is fundamentally concerned with trying to find an appropriate
\emph{push-forward} between two probability measures, $P$ and
$Q$:
\begin{definition}[Push-forward]
  Given $P, Q \in \probSimplex{W}$ we say that a map $S: \aW \to \aW$ pushes
  forward $P$ to $Q$ (denoted as $S_\# P = Q$) if a random variable $X$ with
  distribution $P$ results in $Y \triangleq S(X)$ having distribution $Q$.
\end{definition}
Of interest to us is the class of invertible and ``smooth'' push-forwards:
\begin{definition}[Diffeomorphism]
  A mapping $S$ is a diffeomorphism on $\aW$ if it is invertible, and both $S$
  and $S^{-1}$ are differentiable. Let $\D$ be the space of all diffeomorphisms
  on $\aW$.
\end{definition}
A subclass of these, are those that are ``orientation preserving'':
\begin{definition}[Monotonic Diffeomorphism]
  A mapping $S \in \D$ is orientation preserving, or monotonic, if its Jacobian
  is positive-definite:
  \[
    J_S(u) \psd 0, \quad \forall u \in \aW
  \]  
  Let $\MD \subset \D$ be the set of all monotonic diffeomorphisms on $\aW$.
\end{definition}
The Jacobian $J_S(u)$ can be thought of as
how the map ``warps'' space to facilitate the desired mapping.  Any monotonic diffeomorphism necessarily satisfies the following Jacobian equation:
\begin{lemma}[Monotonic Jacobian Equation]
  \label{lemma:mon_jacobian_push_forward}
  Let $P, Q \in \probSimplex{W}$ and assume they have densities $p$ and $q$.  Any map $S \in \MD$ for which $S \# P=Q$ satisfies the following Jacobian equation: 
  \begin{align}
    \label{eqn:defn:JacobianEqn_MD}
    p(u) = q(S(u)) \det(J_S(u)) \quad \forall u \in \aW
  \end{align}
\end{lemma}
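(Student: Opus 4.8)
The plan is to obtain \eqref{eqn:defn:JacobianEqn_MD} directly from the definition of push-forward together with the classical change-of-variables formula for diffeomorphisms. First I would unwind $S_\# P = Q$: by definition the random variable $Y = S(X)$ with $X \sim P$ has law $Q$, so for every Borel set $A \subseteq \aW$,
\[
  Q(A) = \P(Y \in A) = \P\bigl(X \in S^{-1}(A)\bigr) = P\bigl(S^{-1}(A)\bigr),
\]
where $S^{-1}(A)$ is again Borel because $S$, being a diffeomorphism, is in particular a homeomorphism. Invoking \cref{assmp:lebesgue} to express both measures through their densities, this becomes
\[
  \int_A q(y)\, dy \;=\; \int_{S^{-1}(A)} p(u)\, du \qquad \text{for every Borel } A \subseteq \aW.
\]

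Next I would substitute $y = S(u)$ on the left-hand integral. Since $S \in \D$ maps $S^{-1}(A)$ bijectively onto $A$ with a $C^1$ inverse, the change-of-variables theorem gives
\[
  \int_A q(y)\, dy \;=\; \int_{S^{-1}(A)} q\bigl(S(u)\bigr)\, \abs{\det(J_S(u))}\, du,
\]
and because $S \in \MD$ has a positive-definite Jacobian everywhere, $\det(J_S(u)) > 0$, so the absolute value is superfluous. Equating with the previous display yields
\[
  \int_{B} p(u)\, du \;=\; \int_{B} q\bigl(S(u)\bigr)\, \det(J_S(u))\, du
\]
for $B = S^{-1}(A)$; and as $A$ ranges over all Borel subsets of $\aW$, so does $B$, since $S$ is a Borel isomorphism. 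Two locally integrable functions with the same integral over every Borel set coincide almost everywhere, so $p(u) = q(S(u))\det(J_S(u))$ for Lebesgue-a.e.\ $u$. Under the standing smoothness hypotheses both sides are continuous in $u$, so this upgrades to equality at every $u \in \aW$, which is \eqref{eqn:defn:JacobianEqn_MD}.

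The only genuinely substantive ingredient is the change-of-variables theorem, whose hypotheses are met precisely because $S \in \MD \subset \D$ is a diffeomorphism: this is where the diffeomorphism (rather than merely measurable) assumption is essential, guaranteeing that the push-forward of Lebesgue measure under $S^{-1}$ has density $\abs{\det J_S}$. The remaining steps — rewriting the push-forward identity set-wise, dropping the absolute value via monotonicity, and passing from equality of all set-integrals to a pointwise identity — are routine; the last of them delivers the stated everywhere-equality only after invoking continuity of $p$, $q$, and $J_S$, and absent such regularity one obtains the identity merely a.e., which is all that is actually used in the sequel.
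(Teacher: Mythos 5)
Your argument is correct: the paper states this lemma without proof, treating it as the classical change-of-variables identity, and your derivation (push-forward identity set-wise, change of variables for the $C^1$ diffeomorphism, dropping the absolute value by monotonicity, then passing from equality of integrals to a pointwise identity) is exactly the standard justification being implicitly invoked. Your closing caveat is also apt — without continuity of $p$ and $q$ the identity holds only Lebesgue-a.e., which is all the subsequent relative-entropy formulation actually requires.
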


We will now concern ourselves with two different notions of ``distance'' between probability measures.
\begin{definition}[KL Divergence]
Let $P, Q \in \probSimplex{W}$ and assume they have densities $p$ and $q$. The Kullback-Leibler (KL) divergence, or relative entropy, between $P$ and $Q$ is given by
	\begin{eqnarray}
 D(P\|Q)&=& \bE_{P}\brackets{\log\frac{p(X)}{q(X)}} \nonumber%
\end{eqnarray}
\end{definition}
The KL divergence is non-negative and is zero if and only if $p(u)=q(u)$ for all $u$.

\newcommand{\tp}{\tilde{p}}

\begin{definition}[Wasserstein Distance] 
For $P, Q \in \probSimplex{W}$ with densities $p$ and $q$, the  Wasserstein distance of order two between $P$ and $Q$ can be described as
\begin{align} 
    d(P,Q)^2 &\triangleq \inf\braces{ \bE_{P_{X,Y}}[\|X-Y\|^2]: X \sim P, Y \sim Q} 
\end{align}
\end{definition}

The following theorem will be useful throughout:
\begin{theorem}[\citep{brenier1987decomposition,villani2003topics}]
\label{theorem:Wasserstein:monotonicDiffeomorphism}
Under Assumption~\ref{assmp:lebesgue}, $d(P,Q)$ can be equivalently expressed as
\begin{align} 
    d(P,Q)^2 &\triangleq \inf\braces{ \bE_{P}[\|X-S(X)\|^2]: S_\# P = Q} \label{eqn:monge_problem}
\end{align}
and there is a unique minimizer $S^*$ which satisfies $S^* \in \MD$.
\end{theorem}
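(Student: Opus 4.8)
The statement has two parts: (i) the equivalence of the coupling formulation of the Wasserstein distance with the Monge formulation over push-forward maps in \eqref{eqn:monge_problem}, and (ii) existence, uniqueness, and monotonicity of the optimal map $S^*$. Since the result is attributed to Brenier and to Villani's lecture notes, my plan is to reconstruct the classical argument rather than invent a new one.

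\textbf{Step 1: the Monge value dominates the Kantorovich value.} Any $S$ with $S_\# P = Q$ induces a coupling $P_{X,Y}$ of $(X, S(X))$ with the correct marginals, so $\bE_P[\|X - S(X)\|^2]$ is one of the quantities in the infimum defining $d(P,Q)^2$. Hence the right-hand side of \eqref{eqn:monge_problem} is at least $d(P,Q)^2$. This direction is immediate and requires no regularity.

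\textbf{Step 2: existence of an optimal coupling and Brenier's theorem.} For the reverse inequality I would invoke the Kantorovich existence theorem: the set of couplings with fixed marginals is weakly compact and $(x,y)\mapsto\|x-y\|^2$ is lower semicontinuous, so an optimal coupling $\pi^*$ exists. The heart of the proof is Brenier's theorem: under Assumption~\ref{assmp:lebesgue} (so that $P$ is absolutely continuous with respect to Lebesgue measure), the optimal $\pi^*$ is supported on the graph of a map $S^* = \nabla\psi$ for some convex function $\psi$, and this map is $P$-a.e.\ unique. The key ingredients are (a) cyclical monotonicity of the support of any optimal plan, (b) Rockafellar's theorem identifying cyclically monotone sets as subsets of subdifferentials of convex functions, and (c) Rademacher's theorem, which gives differentiability of $\psi$ $P$-a.e.\ since $P \ll$ Lebesgue. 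Because $\pi^*$ is carried by a graph, it equals $(\mathrm{id}, S^*)_\# P$, so $S^*_\# P = Q$ and $\bE_P[\|X - S^*(X)\|^2] = d(P,Q)^2$, matching the lower bound from Step~1 and proving the equivalence.

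\textbf{Step 3: $S^* \in \MD$.} Uniqueness follows from strict convexity considerations in the Kantorovich dual / the a.e.\ uniqueness of the gradient. For monotonicity in the sense of this paper, I would argue that $S^* = \nabla\psi$ with $\psi$ convex, and—after the regularity theory of Caffarelli, or simply under the smoothness hypotheses implicit in restricting attention to diffeomorphisms—$\psi$ is $C^2$, so $J_{S^*} = \nabla^2\psi \succeq 0$; on the support of interest it is in fact positive-definite, giving $S^* \in \MD$. The main obstacle is precisely this last point: Brenier's theorem only delivers a map that is a gradient of a convex function, defined a.e., whereas the theorem as stated asserts $S^*$ is a genuine (monotonic) diffeomorphism. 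Bridging that gap honestly requires either additional smoothness/positivity assumptions on the densities $p,q$ (Caffarelli-type regularity) or restricting the claim to hold on the interior of $\aW$; I expect the cited references to invoke exactly such hypotheses, and I would flag that dependence rather than claim the diffeomorphism property in full generality.
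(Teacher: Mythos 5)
The paper offers no proof of this theorem; it is imported verbatim from the cited references, so there is nothing internal to compare your argument against. Your reconstruction follows exactly the standard Brenier--Villani route that those citations point to (Monge $\geq$ Kantorovich trivially via the induced coupling; existence of an optimal plan by weak compactness and lower semicontinuity; cyclical monotonicity, Rockafellar's theorem, and Rademacher's theorem to obtain $S^* = \nabla\psi$ with $\psi$ convex, defined and unique $P$-a.e.\ because $P$ is absolutely continuous), and it is correct as far as it goes. Your caveat in Step 3 is the one substantive point worth retaining: Brenier's theorem delivers only an a.e.-defined gradient of a convex function, whereas the theorem as stated asserts $S^* \in \MD$, i.e.\ a genuine diffeomorphism with positive-definite Jacobian on all of $\aW$. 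That stronger conclusion requires Caffarelli-type regularity hypotheses on $p$ and $q$ (e.g.\ densities bounded away from zero and infinity on convex domains) which Assumption~\ref{assmp:lebesgue} alone does not supply; the paper is silently assuming this, and Corollary~\ref{corrollary:md_exists} inherits the same gap. You are right to flag the dependence rather than claim the diffeomorphism property in full generality.
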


Note that this implies the following corollary:
\begin{corollary}
  \label{corrollary:md_exists}
  For any $P,Q$ satisfying \cref{assmp:lebesgue}, there exists a $S \in \MD$ for which $S_{\#}P = Q$, or equivalently, for which \eqref{eqn:defn:JacobianEqn_MD} holds.
\end{corollary}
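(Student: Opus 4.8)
The plan is to deduce the corollary directly from \cref{theorem:Wasserstein:monotonicDiffeomorphism} together with \cref{lemma:mon_jacobian_push_forward}; there is essentially nothing to invent, since the substantive content has already been imported. First I would note that \cref{assmp:lebesgue} is precisely the hypothesis required to invoke \cref{theorem:Wasserstein:monotonicDiffeomorphism}, which asserts that the infimum defining $d(P,Q)^2$ in \eqref{eqn:monge_problem} is attained by a map $S^*$ with $S^* \in \MD$. Because every feasible point of \eqref{eqn:monge_problem} satisfies the constraint $S_\# P = Q$ by definition of the admissible set, the minimizer $S^*$ in particular satisfies $S^*_\# P = Q$. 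Taking $S = S^*$ then immediately establishes the existence claim: a monotonic diffeomorphism pushing $P$ forward to $Q$ exists.

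Next I would dispatch the ``equivalently'' clause, namely that the existence of such an $S$ is the same as the existence of some $S \in \MD$ satisfying the Jacobian equation \eqref{eqn:defn:JacobianEqn_MD}. One direction is immediate from \cref{lemma:mon_jacobian_push_forward}: if $S \in \MD$ and $S_\# P = Q$, then \eqref{eqn:defn:JacobianEqn_MD} holds pointwise on $\aW$. For the converse I would run the standard change-of-variables computation: given $S \in \MD$ satisfying \eqref{eqn:defn:JacobianEqn_MD}, for every Borel set $A \subseteq \aW$ the substitution $y = S(u)$ — valid because $S$ is a diffeomorphism, so $S^{-1}$ is differentiable and $\det J_S(u) > 0$ — yields
\[
Q(A) = \int_A q(y)\,dy = \int_{S^{-1}(A)} q(S(u))\,\det(J_S(u))\,du = \int_{S^{-1}(A)} p(u)\,du = P\bigl(S^{-1}(A)\bigr),
\]
so $S_\# P = Q$. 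Chaining the two directions with the first part closes the equivalence.

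I do not anticipate a genuine obstacle, because this really is a corollary: existence, uniqueness, and monotonicity of the optimal map are entirely contained in the cited Brenier/Villani result, and the Jacobian-equation equivalence is just \cref{lemma:mon_jacobian_push_forward} plus the change-of-variables formula. The only points meriting a line of care are (i) recording that membership in the constraint set of \eqref{eqn:monge_problem} is literally the condition $S_\# P = Q$, so that the optimal $S^*$ qualifies as a witness, and (ii) confirming that the change-of-variables formula applies on the possibly unbounded convex domain $\aW$, which is fine since diffeomorphy supplies the needed $C^1$ regularity of $S$ and $S^{-1}$ and positivity of the Jacobian determinant. If one prefers not to invoke optimality at all, it suffices to observe that \cref{theorem:Wasserstein:monotonicDiffeomorphism} already guarantees the set $\{S : S_\# P = Q\}$ is nonempty and meets $\MD$, which is exactly what the corollary asserts.
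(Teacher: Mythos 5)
Your proof is correct and follows exactly the route the paper intends: the corollary is stated as an immediate consequence of \cref{theorem:Wasserstein:monotonicDiffeomorphism}, whose minimizer $S^*\in\MD$ with $S^*_\#P=Q$ is precisely the required witness, and the Jacobian-equation equivalence is just \cref{lemma:mon_jacobian_push_forward} plus the standard change of variables. Your write-up merely makes explicit the details the paper leaves implicit.
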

\newcommand{\tpsq}{\tilde{p}}

\newcommand{\tpsu}{\tilde{p}(u;S)}
\newcommand{\tPscdot}{\tilde{P}(\cdot;S)}
\newcommand{\tqsu}{\tilde{q}(u;S)}
\newcommand{\tQscdot}{\tilde{Q}(\cdot;S)}

\section{KL Divergence-based Push-Forward}
\label{seca:distributed_push_forward}

In this section, we present the distributed push-forward framework that relies on our previously published relative entropy-based formulation of the measure transport problem, and discuss several issues related to its construction.

\subsection{General Push-Forward} 
\label{subseca:general_push_forward}

\begin{figure}[htbp]
  \centering
  \includegraphics[scale=0.35]{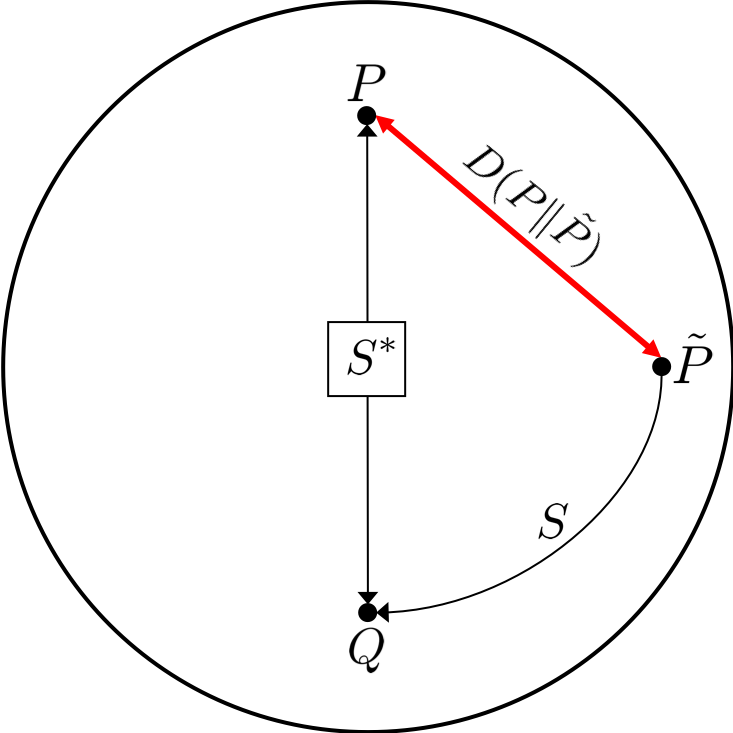}
  \caption{General Push-Forward: Probability measures $P,\tPscdot$ and $Q$ are represented as points in $\aP(\aW)$. When $Q$ is assumed to be constant, an arbitrary map $S \in \MD$ can be thought of as inducing a distribution $\tPscdot$.  Thus, $S$ pushes $\tPscdot$ to $Q$ (the solid black line labeled $S$ in the figure). The problem of interest is to then find the $S$ that minimizes the distance between the true $P$, and $\tPscdot$.  The optimal map $S^*$, represented by the center line,  pushes $P$ to $Q$. }
  \label{fig:gen-push-forward}
\end{figure}

According to \cref{lemma:mon_jacobian_push_forward}, a monotonic diffeomorphism pushing $P$
to $Q$ will necessarily satisfy the Jacobian equation
\eqref{eqn:defn:JacobianEqn_MD}. Note that although we think of a map $S$ as
pushing \emph{from} $P$ \emph{to} $Q$, we have written
\eqref{eqn:defn:JacobianEqn_MD} so that $p$ appears by itself on the left-hand
side, while $S$ is being \emph{acted on} by $q$ on the right-hand side. This
notation is suggestive of the following interpretation: If we think of the
destination density $q$ as an \emph{anchor point}, then for any \emph{arbitrary}
mapping $S \in \MD$, we can describe an \emph{induced} density for $\tpsu$
according to \cref{eqn:defn:JacobianEqn_MD} as:
\begin{align}
  \label{eqn:defn:arbJacobianEqn_MD}
  \tpsu = q(S(u)) \det(J_{S}(u)) \quad \text{ for all } u \in \aW
\end{align}
With this notation, we can interpret 
$\parenth{\tpsu: S \in \MD}$ as a parametric family of densities, and for any fixed $S \in \MD$, $\tpsu$ is a density which integrates to $1$.  We note that by construction, any $S \in \MD$ necessarily pushes $\tPscdot$ to $Q$: $S_{\#}\tPscdot=Q$.  We
can then cast the transport problem as finding the mapping $S \in \MD$
that minimizes the relative entropy between $P$ and 
the induced
$\tilde{P}$. 
\begin{align}
    \label{eqn:opt_prob}
    S^* = \argmin_{S \in \MD} \;\;  D(P\|\tPscdot) 
\end{align}
This perspective is represented visually in
\cref{fig:gen-push-forward}. 

\newcommand{\tpSX}{\tilde{p}(X;S)}

If we again make another natural assumption:
\begin{assumption}
  \label{assmp:finite-KL}
  $P$ admits a density $p$ such that: 
  \[\bE\brackets{\abs{\log p(X)}} < \infty\]
\end{assumption}
We can expand \cref{eqn:opt_prob} and combine with
\eqref{eqn:defn:arbJacobianEqn_MD} to write:
\begin{align}
    S^* &= \argmin_{S \in \MD} \;\;  D(P\|\tPscdot) \nonumber \\
        &= \argmin_{S \in \MD} \;\;  \bE_{P}\brackets{\log\frac{p(X)}{\tpSX}} \nonumber \\
         &= \argmin_{S \in \MD} \;\;  -h(p) - \bE_{P}\brackets{\log \tpSX  } \label{eqn:opt_prob_max_a} \\
         &= \argmin_{S \in \MD} \;\;  -\bE_{P}\brackets{\log \tpSX} \label{eqn:opt_prob_max_b} \\
         &= \argmin_{S \in \MD} \;\;  -\bE_{P} \brackets{ \log q(S(X))+\log \det J_{S}(X) } \label{eqn:opt_prob_max_c} 
\end{align}
where in \eqref{eqn:opt_prob_max_a}, $h(p)$ is the Shannon differential entropy
of $p$, which is fixed with respect to $S$; \eqref{eqn:opt_prob_max_b} is by
\cref{assmp:finite-KL} and Jensen's inequality implying that $\abs{h(p)}<\infty$
and the non-negativity of KL divergence; \eqref{eqn:opt_prob_max_c} is by
combining with \eqref{eqn:defn:arbJacobianEqn_MD}.

We now make another assumption for which we can guarantee efficient methods to solve for \eqref{eqn:opt_prob}.
\begin{assumption}\label{assumption:log-concave}
The density $q$ is log-concave.
\end{assumption}

We can now state the main theorem of this section \citep{kim2015tractable,mesa2015scalable}:
\begin{theorem}[General Push-Forward]
    \label{thrm:general-push-forward}
    Under Assumptions~\ref{assmp:lebesgue}—-\ref{assumption:log-concave},
    \begin{align}
        \label{eqn:general_push_forward}
        \min_{S \in \MD} \;\;  D(P\|\tPscdot) \tag{\textbf{GP}}
    \end{align}
    is a convex optimization problem.
\end{theorem}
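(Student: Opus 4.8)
The plan is to lean on the reformulation of the objective already carried out in the excerpt and then split the convexity claim into two independent parts: convexity of the feasible set $\MD$, and convexity of the reformulated objective functional. By the chain of equalities culminating in \eqref{eqn:opt_prob_max_c} — which uses \cref{assmp:lebesgue} and \cref{assmp:finite-KL} to guarantee $\abs{h(p)}<\infty$, so that $h(p)$ is a genuine finite constant — problem \eqref{eqn:general_push_forward} is equivalent to minimizing over $\MD$ the functional
\[
  F(S) \triangleq -\bE_{P}\brackets{\log q(S(X)) + \log\det J_S(X)} = -\bE_{P}\brackets{\log \tilde p(X;S)},
\]
which I will treat as extended-real-valued; note it is proper, since $F(S) = D(P\|\tPscdot) + h(p) \ge h(p) > -\infty$ using non-negativity of KL divergence (the induced $\tPscdot$ being a genuine probability measure) together with \cref{assmp:finite-KL}. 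It therefore suffices to show that $\MD$ is convex and that $F$ is convex on $\MD$.

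For convexity of $\MD$: given $S_0,S_1\in\MD$ and $\lambda\in[0,1]$, set $S_\lambda \triangleq (1-\lambda)S_0 + \lambda S_1$. Differentiation is linear, so $J_{S_\lambda}(u) = (1-\lambda)J_{S_0}(u) + \lambda J_{S_1}(u)$ is a convex combination of positive-definite matrices, hence $J_{S_\lambda}(u)\psd 0$ for every $u\in\aW$; in particular $\det J_{S_\lambda}(u) > 0$. It then remains to check that $S_\lambda\in\D$: local invertibility with smooth local inverse is immediate from the inverse function theorem, and global injectivity follows from the standard monotonicity-along-segments estimate, valid because $\aW$ is convex — for distinct $u,v\in\aW$,
\[
  \innerprod{S_\lambda(v)-S_\lambda(u),\, v-u} = \int_0^1 (v-u)^\top J_{S_\lambda}\bigl(u+t(v-u)\bigr)(v-u)\,dt > 0 .
\]
I expect this diffeomorphism bookkeeping — pinning down the image of $S_\lambda$ so that it is a diffeomorphism \emph{of $\aW$}, not merely onto its image — to be the one genuinely delicate point; if this causes trouble, one relaxes the feasible set to the manifestly convex class of $C^1$ maps on $\aW$ with everywhere positive-definite Jacobian, on which both $F$ and the induced density $\tpsu$ remain well defined, and states the convexity claim there.

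For convexity of $F$ on $\MD$: write $F = F_1 + F_2$ with $F_1(S) = -\bE_P[\log q(S(X))]$ and $F_2(S) = -\bE_P[\log\det J_S(X)]$. For each fixed $u$, the evaluation $S\mapsto S(u)$ is linear and $z\mapsto -\log q(z)$ is convex by \cref{assumption:log-concave}, so $S\mapsto -\log q(S(u))$ is convex; integrating against $P$ preserves convexity, so $F_1$ is convex. Likewise, for fixed $u$ the map $S\mapsto J_S(u)$ is linear into the $D\times D$ matrices and lands in the positive-definite cone on $\MD$, and $M\mapsto -\log\det M$ is convex on that cone, so $S\mapsto -\log\det J_S(u)$ is convex on $\MD$; integrating against $P$ again preserves convexity, so $F_2$ is convex. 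Hence $F=F_1+F_2$ is a convex functional minimized over the convex set $\MD$, i.e. \eqref{eqn:general_push_forward} is a convex optimization problem. The only non-elementary inputs are \cref{assumption:log-concave} (for the $F_1$ term) and the positivity of the Jacobian built into the definition of $\MD$ (for the $F_2$ term).
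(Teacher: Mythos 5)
Your proof is correct and follows essentially the same route as the paper's: convex combinations of maps in $\MD$ have positive-semidefinite-preserving Jacobians by linearity of differentiation, and the objective is convex because $\log\det$ is concave on the positive-definite cone and $\log q$ is concave by the log-concavity assumption. You are in fact more careful than the paper on the one delicate point — whether the convex combination $S_\lambda$ is genuinely a diffeomorphism of $\aW$ onto $\aW$, rather than merely an injective map with positive-definite Jacobian — which the paper's proof passes over silently; your flagged concern and proposed relaxation of the feasible set are well placed.
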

\begin{proof}
    For any
    $S,\tilde{S}\in \MD$, we have that $J_S, J_{\tilde{S}} \psd 0$. For any
    $\lambda \in [0,1]$ we have that $\tilde{S}_\lambda \triangleq \lambda S +
    (1-\lambda)\tilde{S}$ and $J_{\tilde{S}_\lambda} = \lambda J_S + (1-\lambda)J_{\tilde{S}} \psd 0$. Since $\log\det$ is strictly concave over the space of positive definite matrices \citep{boyd2004convex}, and by assumption $\log q(\cdot)$ is concave, we have that $-\bE_{P}\brackets{\log \tpSX}$ is a convex function of $S$ on $\MD$. Existence of $S^* \in \MD$ for which $\kldist{P}{\tilde{P}(\cdot;S^*)}=0$ is given by \cref{corrollary:md_exists}.
\end{proof}
An important remark on this theorem:
\begin{remark}
    \cref{thrm:general-push-forward} does not place any structural
    assumptions on $P$. It need not be log-concave, for
    example.
\end{remark}

Beginning with \cref{eqn:opt_prob_max_c} above, we see that Problem \eqref{eqn:general_push_forward} can then be solved through the use of a Monte-Carlo approximation of the expectation, and we arrive at the following sample-based version of the formulation:

\begin{align}
    S^* &= \argmin_{S \in \MD} \;\;  \frac{1}{N} \sum_{i=1}^N \brackets{ -\log q(S(X_i)) - \log\det(J_{S}(X_i)) } \label{eqn:stoch_op}
\end{align}

where $X_i \sim p(X)$.

\subsection{Consensus Formulation} 
\label{subseca:consensus_formulation}

The stochastic optimization problem in \eqref{eqn:stoch_op} takes the general
form of:
\eq{
  & \min_S
  & & \sum_{i=1}^N f_i(S)
}
From this perspective, $S$ can be thought of as a \emph{complicating variable}.
That is, this optimization problem would be entirely separable across the sum
were it not for $S$. This can be instantiated as a \emph{global consensus}
problem:
\eq{
  & \min_S
  & & \sum_{i=1}^N f_i(S_i) \\
  & \text{s.t.}
  & & S_i - S = 0
}
where the optimization is now separable across the summation, but we must
achieve global consensus over $S$. With this in mind, we can now write a global
consensus version of \eqref{eqn:stoch_op} as:
\begin{align}
  &\min_{S_i \in \MD} \; - \frac{1}{N} \sum_{i=1}^N \log q(S_i(X_i)) + \log\det (J_{S_i}(X_i)) \nonumber\\
        & \text{s.t.} \quad S_i = S, \quad i=,1 \ldots ,N \label{eqn:stoch_opt_con}
\end{align}
In this problem, we can think of each (batch of) \emph{sample} as independently
inducing some random $\tilde{P}_i$ through a function $S_i$. The method
proposed below can then be thought of as iteratively reducing the distance
between each $\tilde{P}_i$ and the true $P$ by reducing the distance between
each $S_i$. %

This problem is still over an infinite dimensional space of functions $S
\in \MD$, however.

\subsection{Transport Map Parameterization} 
\label{subseca:transport_map_parameterization}
To address the infinite dimensional space of functions mentioned above, as in
\citep{mesa2015scalable,kim2013efficient,kim2015tractable,Marzouk2016} we
parameterize the transport map over a space of multivariate polynomial basis
functions formed as the product of $D$-many univariate polynomials of varying
degree. That is, given some $\vecx = (x_1,\ldots,x_a,\ldots,x_D) \in \aW \subset \bR^D$,
we form a basis function $\phi_{\vecj}(\vecx)$ of multi-index degree $\vecj =
(j_1,\ldots,j_a,\ldots,j_D) \in \cJ$ using univariate polynomials $\psi_{j_a}$ of degree
$j_a$ as:
\begin{align*}
  \phi_{\vecj}(\vecx) = \prod_{a=1}^D\psi_{j_a}(x_a)
\end{align*}
This allows us to represent one component of $S \in \MD$ as a weighted linear
combination of basis functions with weights $w_{d,\vecj}$ as:
\begin{align*}
  S^d(\vecx) = \sum_{\vecj \in \cJ} w_{d,\vecj} \; \phi_{\vecj}(\vecx)
\end{align*}
where $\cJ$ is a set of multi-indices in the representation specifying the order
of the polynomials in the associated expansion, and $d$ denotes the $d^{th}$ component of the mapping. In order to make this problem
finite-dimensional, we must \emph{truncate} the expansion to some fixed
maximum-order $O$.
\begin{align*}
  \cJ = \braces{\vecj \in \bN^D : \sum^D_{i=1}j_i \leq O }
\end{align*}
We can now approximate any nonlinear function $S \in \MD$ as:
\begin{align*}
  S(\vecx) = W\Phi(\vecx)
\end{align*}
where $K \triangleq |\cJ|$ the size of the index-set, $\Phi(\vecx) = [\phi_{\vecj_1}(\vecx),\ldots \phi_{\vecj_K}(\vecx)]^T$, and $W \in \bR^{D\times K}$ is a matrix of weights.  

 In order to avoid confusion and in the spirit of consensus ADMM as shown in \cite{boyd2011distributed}, we introduce a consensus variable $B\triangleq W$.  With this, we can now give a finite-dimensional version of \eqref{eqn:stoch_opt_con} as:
\eqn{
  \label{eqn:stoch_op_mat}
  & \min_{W_i \in \bR^{D\times K}} \; -\frac{1}{N} \sum_{i=1}^N \brackets{ \log q(W_i\Phi(X_i)) + \log\det(W_iJ_{\Phi}(X_i)) } \\
  & \text{s.t.} \quad W_i = B, \quad W_iJ_{\Phi}(X_i) \psd 0, \quad i=1, \ldots ,N
}
with:
\eq{
  \label{eqn:sizes}
  W_i              &= \brackets{ w_1,\ldots, w_K } & D &\times K \\
  \Phi(\cdot)      &= \brackets{ \phi_{\vecj_1}(\cdot), \ldots, \phi_{\vecj_K}(\cdot) }^T & K &\times 1 \\
  \jac_\Phi(\cdot) &= \brackets{ \frac{\partial \phi_{\vecj_i}}{\partial x_j}(\cdot)}_{i,j} & K &\times D
}
where we have made explicit
the implicit constraint that $\det(J_S) \geq 0$ by ensuring that $BJ_{\Phi}
\psd 0$. We now provide two important remarks:
\begin{remark}
  \label{remark:any_poly}
  In principle, any basis of polynomials whose finite-dimensional approximations
  are sufficiently dense over $\aW$ will suffice.
  In  applications where $P$ is assumed known, the basis functions are chosen to be orthogonal with respect to
  the reference measure $P$:
  \begin{align*}
    \int_{\aW} \phi_{\vecj}(\vecx) \; \phi_{\veci}(\vecx) \; p(x)dx = \mathbbm{1}_{\veci=\vecj}
  \end{align*}
  Within the context of Bayesian inference, for instance, this greatly simplifies computing conditional expectations, corresponding conditional moments, etc. \citep{schoutens2000stochastic}.  
\end{remark}

\begin{remark}
When it is important to ensure that the approximation satisfies the properties of a diffeomorphism, we can project $S(\vecx)$ onto $\MD$ with solving a quadratic optimization problem, as discussed in \cref{subseca:appendix:cc}.
\end{remark}

We also note that the polynomial representation presented above is chosen to
best approximate a transport map, independent of a specific application or
representation of the data (Fourier, wavelet, etc.). As mentioned in 
\cref{remark:any_poly} above, in principle any dense basis will suffice. 

\subsection{Distributed Push-Forward with Consensus ADMM} 
\label{subseca:distributed_consensus_admm}

In this section we will reformulate 
\eqref{eqn:stoch_op_mat} within the framework of the alternating direction method of multipliers (ADMM), and provide our main
result, \cref{corollary:distributed_push_forward}.
\subsubsection{Distributed Algorithm} 
\label{subsubseca:distributed_algorithm}

Using ADMM, we can reformulate \eqref{eqn:stoch_op_mat} as a global consensus problem to accommodate a parallelizable implementation. For
notational clarity, we write $\Phi_i \triangleq \Phi(X_i)$ and $J_i \triangleq
J_{\Phi}(X_i)$. We then introduce the following auxiliary variables:
\begin{equation*}
 \quad B\Phi_i \triangleq p_i, \quad BJ_i \triangleq Z_i
\end{equation*}
We can now write \eqref{eqn:stoch_opt_con} as:
\eq{
  &\min_{\{W,Z,p\}_i,B} && \frac{1}{N}\sum_{i=1}^N - \log q(p_i) -\log \det Z_i  + \half \rho \|W_i - B \|_2^2  \\
                &&& + \frac{1}{N} \sum_{i=1}^N \half \rho \| B\Phi_i - p_i\|_2^2 +  \half \rho \| BJ_i - Z_i\|_2^2  \\
  &\text{s.t.}  && B\Phi_i = p_i:  \quad\quad\quad\quad \gamma_i \quad (D \times 1) \\
                &&& BJ_i = Z_i:  \quad\quad\quad\quad \lambda_i \quad (D \times D) \\
                &&& W_i - B = 0:  \quad\quad\quad\alpha_i \quad (D \times K) \\
                &&& Z_i \succeq 0
}
where in the feasible set, we have denoted the Lagrange multiplier that will be
associated with each constraint to the right. 

Although coordinate descent algorithms solve for one variable at a time while
fixing the others and can be extremely efficient, they are not always guaranteed
to find the globally optimal solution \cite{wright2015coordinate}. Using the
consensus formulation of ADMM above, we consider a problem formulation with the
same global optimum which contains \textit{quadratic penalties} associated with
equality constraints in the objective function and constraints still imposed.
The consensus formulation has the key property that its Lagrangian, termed the
"augmented Lagrangian" \cite{boyd2011distributed}, can be globally minimized
with coordinated descent algorithms for any $\rho > 0$.  Note that when $\rho =
0$, the augmented Lagrangian is equivalent to the standard (unaugmented)
Lagrangian associated with \eqref{eqn:stoch_op_mat}.

We can now raise the constraints to form the fully-penalized augmented
Lagrangian as:
\beqas
L_\rho(W,Z,p,B; \gamma, \lambda,\alpha) 
 &=&  \frac{1}{N}\sum_{i=1}^N -\log q (p_i) -\log \det Z_i  \\
 &+&   \frac{1}{N}\sum_{i=1}^N \half \rho \|W_i - B \|_2^2  + \half \rho \| B\Phi_i - p_i\|_2^2 \\
 &+&  \frac{1}{N}\sum_{i=1}^N   \half \rho \| BJ_i - Z_i\|_2^2 + \gamma_i^T(p_i-B\Phi_i )\\
 &+&  \frac{1}{N}\sum_{i=1}^N  \tr \parenth{\lambda_i^T (Z_i - BJ_i)}+ \tr \parenth{\alpha_i^T (W_i - B)}
 \label{eqn:penalized_lag}
\eeqas

The key property we leverage from the ADMM framework is the ability to minimize
this Lagrangian across each optimization variable \itt{sequentially}, using only the
\itt{most recently} updated estimates. After simplification (details can be found
in the Appendix), the final ADMM update equations for the remaining variables
are:
\begin{subequations} 
  \label{eqn:ADMMfin}
  \begin{align}
    B^{k+1}         &= \cB_i \cdot \cB_{s} \label{eqn:ADMMfin:B} \\  
    W_i^{k+1}       &= -\frac{1}{\rho} \alpha^{k}_i+B^{k+1} \label{eqn:ADMMfin:F}  \\
    Z_i^{k+1}       &= Q \tilde{Z}_{i} Q^T \label{eqn:ADMMfin:Z} \\
    \gamma_i^{k+1}  &= \gamma_i^k + \rho (p_i^{k+1}-B^{k+1}\Phi_i) \label{eqn:ADMMfin:gamma} \\
    \lambda_i^{k+1} &= \lambda_i^k + \rho(Z_i^{k+1} - B^{k+1}J_i) \label{eqn:ADMMfin:lambda} \\
    \alpha_i^{k+1}  &= \alpha_i^k + \rho(W_i^{k+1}-B^{k+1}) \label{eqn:ADMMfin:alpha} \\
    p_i^{k+1}       &= \argmin_{p_i} - \log q(p_i) + \textrm{pen}(p_i) \label{eqn:ADMMfin:p}
  \end{align}
\end{subequations}
We look first at the consensus variable $B^{k+1}$. We can separate its update
into two pieces: a static component $\cB_s$, and an iterative component $\cB_i$:
\begin{subequations}
  \label{eqn:ADMM:B_details}
  \begin{align}
    \cB_i   &= \frac{1}{N}\sum_{i=1}^N \brackets{ \rho \paren{ W_i^k + p^{k}_i \Phi_i^{T}+Z_i^{k}J_i^{T} } + \gamma_i^{k}\Phi_i^{T}+\lambda_{i}^{k}J_i^{T}+\alpha_{i}^{k} } \\
    \cB_s   &= \brackets{\rho \paren{ I + \frac{1}{N}\sum_{i=1}^{N} \Phi_i \Phi_i^{T} + J_i J_i^{T} } }^{-1}
  \end{align}
\end{subequations}
The consensus variable can then be thought of as averaging the effect of all
other auxiliary variables, and forming the current best estimate for consensus
among the distributed computational nodes. 

The $p$-update is the only remaining minimization step that cannot necessarily be solved in
closed form, as it completely contains the structure of the $q$ density. In its
penalization, all other optimization variables are fixed:
\begin{align*}
  \mathrm{pen}(p_i) = \half \rho \| B^{k+1}\Phi_i - p_i \|_2^2 + \gamma_{i}^{kT}(p_i - B^{k+1}\Phi_i)
\end{align*}
The formulation of \eqref{eqn:ADMMfin} has the following desirable properties:
  \bitm
    \item \cref{eqn:ADMMfin:B,eqn:ADMMfin:F,eqn:ADMMfin:Z,eqn:ADMMfin:gamma,eqn:ADMMfin:lambda,eqn:ADMMfin:alpha} admit closed form solutions. In particular, \cref{eqn:ADMMfin:F,eqn:ADMMfin:gamma,eqn:ADMMfin:lambda,eqn:ADMMfin:alpha} are simple arithmetic updates;
    \item \cref{eqn:ADMMfin:p} is a penalized $d$-dimensional-vector convex optimization problem that entirely captures the structure of $Q$. In particular, any changes to the problem specifying a different structure of $Q$ will be entirely confined in this update; furthermore, algorithm designers can utilize any optimization procedure/library of their choosing to perform this update.
  \eitm

With this, we can now give an efficient, distributed version of the general push-forward theorem:
\begin{corollary}[Distributed Push-Forward] 
  \label{corollary:distributed_push_forward}
  Under \cref{assmp:lebesgue} and \cref{assumption:log-concave},
  \eqn{
    \label{eqn:stoch_opt_mat_con}
    &\min_{W_i \in \bR^{d\times K}} & & -\frac{1}{N} \sum_{i=1}^N \log q(W_i\Phi_i) + \log\det (W_iJ_i) \\
    & \text{s.t.} & & W_i = W, \;\; WJ_i \psd 0 \quad i=1,\ldots,N
  }
  is a convex optimization problem. 
\end{corollary}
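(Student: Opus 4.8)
The plan is to follow the proof of \cref{thrm:general-push-forward} almost verbatim, checking that passing to the polynomial parameterization $S = W\Phi(\cdot)$, to the Monte-Carlo average over the samples $X_1,\dots,X_N$, and to the consensus form does not destroy convexity. First I would eliminate the consensus variable: the constraints $W_i = W$ are linear, so on the feasible set every $W_i$ coincides with the common matrix $W$, and \eqref{eqn:stoch_opt_mat_con} is equivalent to
\[
  \min_{W \in \bR^{D\times K}} \; -\frac{1}{N}\sum_{i=1}^N \brackets{\log q(W\Phi_i) + \log\det(W J_i)}
  \quad \text{s.t.} \quad W J_i \psd 0,\;\; i=1,\dots,N .
\]
It then suffices to show this objective is convex over this feasible set; lifting back to the variables $(W_1,\dots,W_N,W)$ with the linear constraints $W_i = W$ preserves convexity, so the original formulation is convex as well.

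Second, I would check that the feasible set is convex and that the objective is convex on it, termwise. For each sample the map $W \mapsto W J_i$ is linear, and the monotonicity constraint $W J_i \psd 0$ — read, as in the definition of $\MD$, as positive-definiteness of the associated quadratic form, i.e.\ $W J_i + (W J_i)^T \psd 0$ — cuts out the preimage under this linear map of the convex cone of matrices with positive-semidefinite symmetric part; a finite intersection of such preimages is convex. On the objective side, $W \mapsto W\Phi_i$ is linear, so by \cref{assumption:log-concave} ($q$ log-concave, hence $-\log q$ convex) the term $W \mapsto -\log q(W\Phi_i)$ is convex on all of $\bR^{D\times K}$; likewise $W \mapsto W J_i$ is linear and $-\log\det$ is convex on the cone just described (the concavity fact used, via \citep{boyd2004convex}, in \cref{thrm:general-push-forward}), so $W \mapsto -\log\det(W J_i)$ is convex on the feasible set. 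A positively weighted sum of convex functions is convex, so the objective is a convex function over the convex feasible set, and the problem is convex.

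The one place that needs genuine care — and the main obstacle — is the $\log\det$ term, since $W J_i$ need not be symmetric, whereas the textbook statement of concavity of $\log\det$ is for the symmetric positive-definite cone. I would handle this exactly as the proof of \cref{thrm:general-push-forward} implicitly does: either by recording that $A \mapsto \log\det A$ remains concave on the larger cone $\{A : A + A^T \psd 0\}$ — which is precisely the set picked out by the monotonicity constraint — or, more cleanly, by importing convexity directly from \cref{thrm:general-push-forward}: the assignment $W \mapsto S_W \triangleq W\Phi(\cdot)$ is linear into the polynomial maps in $\MD$ carved out by the constraints, the proof of \cref{thrm:general-push-forward} shows pointwise in $u$ that $S \mapsto -\log q(S(u)) - \log\det J_S(u)$ is convex on $\MD$ (using $J_{\lambda S + (1-\lambda)\tilde S} = \lambda J_S + (1-\lambda)J_{\tilde S}$), and convexity is preserved under precomposition with the linear map $W \mapsto S_W$ and under the finite average over $X_1,\dots,X_N$. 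Everything else is bookkeeping, and no existence or attainment claim is needed for the statement as phrased.
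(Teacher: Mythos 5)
Your proof is correct and takes essentially the same route the paper intends: the paper gives no separate proof of this corollary, relying (as you do) on the fact that $W \mapsto W\Phi_i$ and $W \mapsto WJ_i$ are linear, so that convexity of the objective is inherited from the argument of \cref{thrm:general-push-forward}, the constraints $W_i = W$ are linear, and the monotonicity constraints cut out a convex feasible set. Your explicit attention to the concavity of $\log\det$ on the cone $\{A : A + A^T \psd 0\}$ of not-necessarily-symmetric matrices is a subtlety that the paper's own proof of \cref{thrm:general-push-forward} glosses over (it cites concavity of $\log\det$ only on the symmetric positive-definite cone), so your treatment is, if anything, more careful than the original.
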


\begin{remark}
  ADMM convergence's  properties are robust to inaccuracies in the initial
  stages of the iterative solving process \citep{boyd2011distributed}.
  Additionally several key concentration results provide very strong bounds for
  averages of random samples from log-concave distributions, showing that the
  approximation is indeed robust \citep[Thrm 1.1, 1.2]{bobkov2011concentration}.
\end{remark}

The above framework, under natural assumptions, facilitates the efficient,
distributed and scalable calculation of an optimal map that pushes forward some
$P$ to some $Q$.

\subsection{Structure of the Transport Map} 
\label{subseca:structure_of_the_transport_map}
An important consideration in ensuring the construction of transport maps is
efficient is their underlying \emph{structure}. In
\cref{subseca:transport_map_parameterization} we described a parameterization of
the transport map through the multi-index set $\cJ$ - the indices of polynomial orders
involved in the expansion. However, this parameterization tends to be unfeasible to use in high dimension or with high order polynomials due to the exponential rate at which the number of polynomials increases with respect to these two properties. 

In \citep{Marzouk2016}, two less expressive, but more computationally feasible map structures that can be used to generate the transport map were discussed, which we briefly reproduce here, along with some useful properties.  For more specific details and examples of multi-index sets pertaining to each mode for implementation purposes, see \cref{subseca:appendix:c}

The first alternative to the map pertaining to the fully-expressive mapping is
the Knothe-Rosenblatt map \citep{Emi2013}, which our group also previously used
within the context of generating transport maps for  optimal message point
feedback communication \citep{ma2011generalizing}.  Here, each component of the
output, $S^d$, is only a function of the first $d$ components of the input,
resulting in a mapping that is \emph{lower-triangular}. Both the
Knothe-Rosenblatt and dense mapping described above perform the transport from
one density to another, but with \emph{different} geometric transformations. An
example of these differences can be found in Figures 3 and 4 of
\citep{ma2011generalizing}.

A Knothe-Rosenblatt arrangement gives the following multi-index set (note that
the index-set is now sub-scripted according to dimension of the data denoting
the dependence on data component):

\eq{
    \cJ^{KR}_d = \braces{ \vecj \in \bN^D : \sum^D_{i=1}j_i \leq O \wedge  j_i = 0, \forall i > d}, d=1,\ldots,D
}

An especially useful property of this parameterization is the following identity for the Jacobian of the map:

\begin{align}
\log\det(J_{S}(X_i)) = \sum_{d=1}^D\log\partial_dS^d(X_i)
\end{align}

where $\partial_dS^d(X_i)$ represents the partial derivative of the $d^{th}$ component of the mapping with respect to the $d^{th}$ component of the data, evaluated at $X_i$.

Furthermore, the positive-definiteness of the Jacobian can equivalently be enforced for a lower-triangular mapping by ensuring the following:

\begin{align}
\partial_dS^d > 0, \quad 1 \leq d \leq D \label{eqn:kr_psd}
\end{align}

We can then write a Knothe-Rosenblatt special-case version of \cref{eqn:stoch_opt_con} as:
\begin{align}
  &\min_{S_i \in \KRMD} \; - \frac{1}{N} \sum_{i=1}^N \log q(S_i(X_i)) +  \sum_{d=1}^D\log\partial_dS_i^d(X_i) \nonumber\\
         &\text{s.t.} \quad S_i = S, \quad i=1,\ldots,N \label{eqn:stoch_opt_con_kr}
\end{align}

Indeed, we use this to our advantage in Section \ref{seca:sequential-composition}.

Finally, in the event that the Knothe-Rosenblatt mapping also proves to have too high of model complexity, an even less expressive mapping is a Knothe-Rosenblatt mapping that ignores all multivariate polynomials that involve more than one data component of the input at a time, resulting in the following multi-index set:

\eq{
    \cJ^{KRSV}_d = \braces{ \vecj \in \bN^D : \sum^D_{i=1}j_i \leq O \wedge j_i j_l = 0, \forall i\neq l \wedge j_i = 0, \forall i > d}, \quad d=1,\ldots,D
}

Although less expressive and less precise than the total order Knothe-Rosenblatt map, these maps can often still perform at an acceptable level of accuracy with respect to many problems.

\subsection{Algorithm for Inverse Mapping with Knothe-Rosenblatt Transport} 
\label{subseca:algorithm_for_inverse_mapping}
It may be desirable to compute the inverse mapping of a given sample from $Q$, that is, $S^{-1}(X), X \sim Q$.  When the forward mapping $S$ is constrained to have Knothe-Rosenblatt structure, and a polynomial basis is used to parameterize the mapping, the process of inverting a sample from $Q$ reduces to solving a sequential series of polynomial root-finding problems \citep{Marzouk2016}.  We give a more detailed implementation-based explanation of this process alongside a discussion of implementation details for the Knothe-Rosenblatt maps in \cref{subseca:appendix:d}.

\newcommand{\tpSQ}{\tilde{p}_{S,Q}}
\newcommand{\tPSQ}{\tilde{P}_{S,Q}}
\newcommand{\trho}{\tilde{\rho}}
\renewcommand{\cU}{\mathsf{U}}
\newcommand{\E}{\mathbb{E}}

\newcommand{\rhok}{\rho_{k}}
\newcommand{\rhokprev}{\rho_{k-1}}

\section{Sequential Composition of Optimal Transportation Maps}
\label{seca:sequential-composition}

\begin{figure*}[ht]
    \centering
    \includegraphics[width=\textwidth]{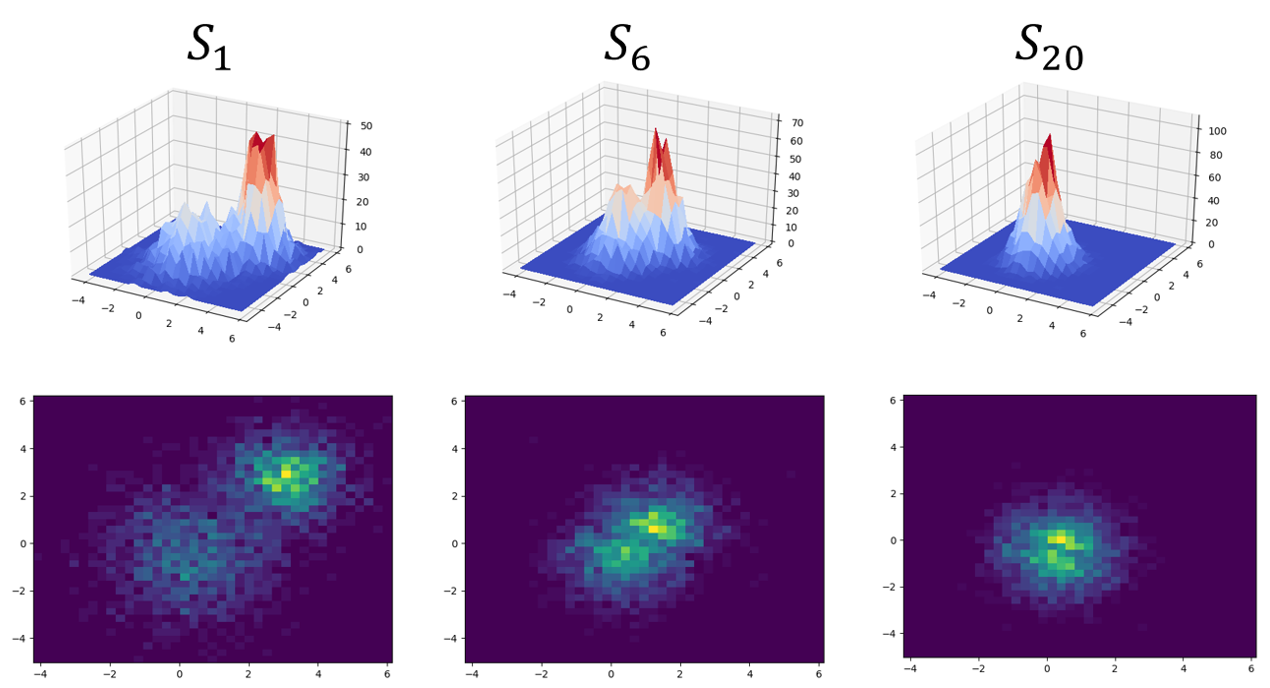}
    \caption{A visual representation of the effect a sequential composition has over the density of a set of samples shown at intermediary stages of the mapping sequence.  $P$ is a 2-dimensional bimodal distribution, and $Q$ is standard Gaussian}
    \label{fig:seq-push-forward}
\end{figure*}

In this section, we introduce a scheme for using many individually computed maps in sequential composition to achieve an overall effect of a single large mapping from $P$ to $Q$.  By using a sequence of maps to transform $P$ to $Q$ instead of a single one-shot map, one can theoretically rely on models of lower complexity to represent each map in the sequence, as although each map is, on its own, ``weak" in the sense of its ability to induce large changes in the distribution space, the combined action of many such maps together can potentially successfully transform samples as desired.  This is especially attractive for model structures that increase exponentially in complexity with problem size, such as the dense polynomial chaos structure discussed on the previous section.  This sequential composition process is visually represented in Figure \ref{fig:seq-push-forward}. 

Moving forward, we first take a brief look at a non-equilibrium thermodynamics interpretation of this methodology to further justify the use of such a scheme, and then derive a slightly different ADMM problem to implement it.  

\newcommand{\rhoinf}{\rho_\infty}
\subsection{Non-Equilibrium Thermodynamics and Sequential Evolution of Distributions}
One approach to interpreting sequential composition of maps is to borrow ideas from statistical physics, where we can interpret $q$ as the equilibrium density ($\rho_\infty)$ of particles in a system, which at time $0$ is out of equilibrium with  density $P$ (also termed $\rho_0$).   Since $q$ is an equilibrium density, it can be written as a Gibbs distribution (with temperature equal to 1 for simplicity): $q(u) \equiv \rhoinf(u) = Z^{-1} \exp \parenth{- \Psi(u)}$.
For instance, if $Q$ pertains to a standard Gaussian, then $\Psi(u)=\half u^2$.
Assuming the particles obey the Langevin equation, it is well known that the evolution of the particle density as a function of time $(\rho_t: t\geq 0)$ obeys the Fokker-Planck equation.  It was shown in \citep{Jordan1998} that the trajectory of $(\rho_t: t\geq 0)$ can be interpreted from  variational principles.  Specifically, 
\begin{theorem}[\citep{Jordan1998} Thm 5.1]\label{thm:JKO}
Define $\rho_0=p$ and $\rho_\infty=q$ and assume that $\kldist{\rho_0}{\rho_\infty} < \infty$.
For any  $h>0$, consider the following minimization problem:
\beqa
A(\rho) &\triangleq& \half d(\rhokprev,\rho)^2 + h\kldist{\rho}{\rho_\infty}\label{eqn:disc:onestep:minimization:rho:A}\\
\rhok &\triangleq& \argmin_{\rho \in \probSimplex{\aW}} A(\rho) \label{eqn:disc:onestep:minimization:rho:rhok}
\eeqa
Then as $h \downarrow 0$, the piecewise constant interpolation which equals $\rhok$ for $t \in [kh,(k+1)h)$ converges weakly in $L_1 (\reals^D)$ to $(\rho_t: t\geq 0)$, the solution to the Fokker-Planck equation.
\end{theorem}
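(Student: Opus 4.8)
The plan is to run the standard minimizing-movement (JKO) argument: (i) show each one-step minimization in \eqref{eqn:disc:onestep:minimization:rho:rhok} is well posed; (ii) extract a priori estimates that are uniform in $h$; (iii) derive a discrete Euler--Lagrange identity and pass to the limit in it to obtain the weak formulation of Fokker--Planck; and (iv) invoke uniqueness for Fokker--Planck to upgrade subsequential convergence to convergence of the whole family $\rho^h$. Throughout I would exploit that, since $\rhoinf = Z^{-1}\exp(-\Psi)$, one may write $\kldist{\rho}{\rhoinf} = \int \rho\log\rho + \int \Psi\,\rho + \log Z$, so the relative-entropy term splits into the weakly lower semicontinuous, strictly convex Boltzmann entropy and a potential-energy term which, because $\Psi$ grows at least linearly (here $\Psi(u)=\half\norm{u}^2$), controls second moments and prevents escape of mass. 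With this splitting, the functional $A(\rho)$ in \eqref{eqn:disc:onestep:minimization:rho:A} is weakly lower semicontinuous with weakly compact sublevel sets (the squared Wasserstein term $\rho\mapsto d(\rhokprev,\rho)^2$ is weakly lsc and, once second moments are bounded, confines mass), and strict convexity of the entropy yields a \emph{unique} minimizer; by induction $(\rhok)_{k\ge0}$ is well defined with finite second moment and finite relative entropy at every step.

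For the a priori estimates, using $\rho=\rhokprev$ as a competitor in \eqref{eqn:disc:onestep:minimization:rho:rhok} gives $A(\rhok)\le A(\rhokprev)$, hence
\[
\half\, d(\rhokprev,\rhok)^2 + h\,\kldist{\rhok}{\rhoinf} \;\le\; h\,\kldist{\rhokprev}{\rhoinf};
\]
summing in $k$ yields $\sum_{k\ge1} d(\rhokprev,\rhok)^2 \le 2h\,\kldist{\rho_0}{\rhoinf} < \infty$ and the monotonicity $\kldist{\rhok}{\rhoinf}\le\kldist{\rho_0}{\rhoinf}$. The first bound gives, uniformly in $h$, an approximate $\tfrac12$-H\"older-in-time modulus for the piecewise-constant interpolation $\rho^h$ in the Wasserstein metric; combined with a uniform second-moment bound on $[0,T]$ (from $\big| M(\rhok)^{1/2}-M(\rhokprev)^{1/2}\big|\le d(\rhokprev,\rhok)$ with $M(\rho)\triangleq\int\norm{x}^2 d\rho$, summed against the step-cost estimate) and the resulting uniform lower bound on the Boltzmann entropy, one gets tightness of $\{\rho^h\}$ and, via a refined Arzel\`a--Ascoli theorem in $(\probSimplex{\aW},d)$, a subsequence converging narrowly and locally uniformly in time to a limit curve $(\rho_t)$.

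Next I would derive the discrete weak formulation. Fixing $\xi\in C_c^\infty(\aW;\reals^D)$, perturbing the minimizer by the near-identity push-forward $\rho_k^s \triangleq (\mathrm{id}+s\xi)_\#\rhok$, and setting the derivative of $A(\rho_k^s)$ at $s=0$ to zero identifies three contributions: the Wasserstein term contributes $\tfrac1h\int\innerprod{x-t_k(x),\,\xi(x)}\,d\rhok$, where $t_k$ is the Brenier optimal map pushing $\rhok$ onto $\rhokprev$ (existence by \cref{assmp:lebesgue} and \cref{theorem:Wasserstein:monotonicDiffeomorphism}); the entropy term contributes $-\int(\nabla\cdot\xi)\,d\rhok$; and the potential term contributes $\int\innerprod{\nabla\Psi,\xi}\,d\rhok$. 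Multiplying by $h$, summing over the steps in $[0,T]$, Taylor-expanding $\Psi$, using that $\norm{t_k-\mathrm{id}}_{L^2(\rhok)}=d(\rhokprev,\rhok)$ together with the summability $\sum_k d(\rhokprev,\rhok)^2\le 2h\,\kldist{\rho_0}{\rhoinf}$ to absorb the quadratic-in-displacement error terms, and passing to the narrow limit along the subsequence, the (telescoping) sum converges to the weak formulation of $\partial_t\rho = \Delta\rho + \nabla\cdot(\rho\,\nabla\Psi)$ with initial datum $\rho_0=p$, namely
\[
\int_0^T\!\!\int_{\aW}\paren{\rho\,\partial_t\zeta + \rho\,\Delta\zeta - \rho\,\innerprod{\nabla\Psi,\nabla\zeta}}\,dx\,dt \;=\; -\int_{\aW}\zeta(\cdot,0)\,p\,dx
\]
for all smooth $\zeta$ vanishing at $t=T$. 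Since this equation has a unique solution in the class of curves with bounded second moment and finite entropy, every subsequential limit coincides with $(\rho_t:t\ge0)$, which forces the entire family $\rho^h$ to converge; the uniform entropy bound supplies the uniform integrability needed to upgrade narrow convergence to convergence in $L^1(\reals^D)$.

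\textbf{Main obstacle.} The delicate step is the first variation of the squared Wasserstein distance in the third paragraph: one must use the existence and gradient-of-convex structure of the optimal map $t_k$ to show that the derivative of $d(\rhokprev,\rho_k^s)^2$ at $s=0$ is \emph{exactly} the momentum $2\int\innerprod{x-t_k(x),\xi(x)}\,d\rhok$, with no spurious contributions, and then to control the displacement $x-t_k(x)$ finely enough that the discrete identities survive the limit $h\downarrow0$. The secondary difficulty is bundling the moment and entropy a priori bounds tightly enough to rule out loss of mass at infinity along the interpolation --- precisely where the linear growth of $\Psi$ is used.
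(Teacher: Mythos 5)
The paper does not prove this statement; it imports it verbatim as Theorem 5.1 of Jordan--Kinderlehrer--Otto (the cited \citep{Jordan1998}), and your sketch is a faithful outline of exactly that original minimizing-movement proof: one-step well-posedness via lower semicontinuity and convexity of the free energy, the telescoping estimate $\sum_k d(\rho_{k-1},\rho_k)^2 \le 2h\,\kldist{\rho_0}{\rho_\infty}$, the push-forward first variation yielding the discrete Euler--Lagrange identity, and passage to the limit plus uniqueness of the Fokker--Planck solution. The outline is correct and consistent with the source, so there is nothing to reconcile against the paper itself.
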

The log-concave structure of $q$ we have exploited previously also has implications for exponential convergence to equilibrium with this statistical physics perspective:
\begin{theorem}[\citep{bakry1985diffusions}]
If $q$ is uniform log-concave, namely
\[ \nabla^2 \Psi(u) \succeq \lambda I_D\]
for some $\lambda > 0$ with $I_D$ the $D \times D$ identity matrix, then: 
\beqas
\kldist{\rho_t}{\rho_\infty} &\leq& e^{-2\lambda t} \kldist{\rho_0}{\rho_\infty}.
\eeqas
\end{theorem}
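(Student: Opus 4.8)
The plan is to run the classical entropy--entropy-dissipation argument, of which the Bakry--\'Emery bound is the prototype. Throughout write $H(t) \triangleq \kldist{\rho_t}{\rhoinf}$ for the relative entropy along the Fokker--Planck flow and, since $\rhoinf \propto e^{-\Psi}$, write the equation in gradient-flow form as $\partial_t \rho_t = \nabla \cdot \parenth{\rho_t\, \nabla \log \tfrac{\rho_t}{\rhoinf}}$. Introduce the relative Fisher information $I(t) \triangleq \int_{\reals^D} \rho_t(u)\, \norm{\nabla \log \tfrac{\rho_t(u)}{\rhoinf(u)}}^2 du \ge 0$ and set $g_t \triangleq \log(\rho_t / \rhoinf)$.

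First I would establish the de~Bruijn-type identity $\dot H(t) = -I(t)$ by differentiating under the integral, using mass conservation to drop the $\int \partial_t \rho_t\, du$ term, substituting the gradient-flow form of the equation, and integrating by parts once (the boundary term vanishes since $\Psi$ is confining and $\rho_t$ decays rapidly by parabolic regularity). This already shows $H$ is non-increasing; the substance of the theorem is the exponential rate, which is equivalent to the logarithmic Sobolev inequality $I(t) \ge 2\lambda H(t)$ holding along the trajectory.

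The technical heart is the second-order computation. Differentiating $I(t)$, integrating by parts twice, and invoking Bochner's identity (equivalently, the $\Gamma_2$-calculus for the generator $L = \Delta - \nabla\Psi\cdot\nabla$ associated with $\rhoinf$) yields
\[
\dot I(t) \;=\; -2 \int_{\reals^D} \rho_t \parenth{ \norm{\nabla^2 g_t}_F^2 \;+\; \nabla g_t^{\top}\, \nabla^2 \Psi\, \nabla g_t }\, du ,
\]
with $\norm{\cdot}_F$ the Frobenius norm. Both integrand terms are non-negative, and the hypothesis $\nabla^2 \Psi \succeq \lambda I_D$ bounds the second below by $\lambda \norm{\nabla g_t}^2$, so $\dot I(t) \le -2\lambda I(t)$. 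I expect this to be the main obstacle: the repeated integration by parts requires the third-derivative terms to be integrable and the boundary terms to vanish, which in a fully rigorous treatment is handled by regularizing (adding a small viscosity, truncating the domain, or working directly with the semigroup and the $\Gamma_2$ criterion of \citep{bakry1985diffusions}) and then passing to the limit.

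Finally I would integrate $\dot I(s) \le -2\lambda I(s)$ from $t$ to $\infty$. Since $\nabla^2\Psi \succeq \lambda I_D$ forces $\rho_t \to \rhoinf$ and, via standard Fokker--Planck/ergodicity arguments, $H(\infty) = I(\infty) = 0$, this gives $I(t) \le 2\lambda \int_t^\infty I(s)\, ds = 2\lambda \int_t^\infty \parenth{-\dot H(s)}\, ds = 2\lambda H(t)$, i.e. the logarithmic Sobolev inequality along the flow. Substituting into $\dot H(t) = -I(t)$ gives $\dot H(t) \le -2\lambda H(t)$, and Gr\"onwall's inequality then yields $H(t) \le e^{-2\lambda t} H(0) = e^{-2\lambda t}\, \kldist{\rho_0}{\rhoinf}$, as claimed. (Alternatively, one may simply quote the Bakry--\'Emery criterion to obtain the logarithmic Sobolev inequality $I \ge 2\lambda H$ for $\rhoinf$ directly, which collapses the argument to the first and last steps.)
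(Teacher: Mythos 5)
The paper offers no proof of this statement---it is imported verbatim from \citep{bakry1985diffusions}---so there is no internal argument to compare against. Your proposal is the standard entropy--entropy-dissipation (Bakry--\'Emery) argument: the de~Bruijn identity $\dot H(t)=-I(t)$, the Bochner/$\Gamma_2$ computation giving $\dot I(t)\le -2\lambda I(t)$ under $\nabla^2\Psi\succeq\lambda I_D$, integration over $[t,\infty)$ to recover the logarithmic Sobolev inequality $I\ge 2\lambda H$ along the flow, and Gr\"onwall. This is the right route and is correct in structure, and you appropriately flag the one genuinely delicate point, namely justifying the repeated integrations by parts and the vanishing of boundary terms, which a rigorous treatment handles by regularization or by working at the level of the semigroup as in the cited reference. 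One transcription slip to fix: after integrating $\dot I(s)\le-2\lambda I(s)$ you write $I(t)\le 2\lambda\int_t^\infty I(s)\,ds$, but the inequality that integration actually yields (and the one you rely on in the very next sentence to conclude $\dot H\le-2\lambda H$) is $I(t)\ge 2\lambda\int_t^\infty I(s)\,ds=2\lambda H(t)$; the direction of that displayed inequality should be reversed.
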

Note that if $q$ is the density of a standard Gaussian, this inequality holds with $\lambda = 1$.

\subsection{Sequential Construction of Transport Maps}
We now note that for any $h > 0$, \eqref{eqn:disc:onestep:minimization:rho:rhok} encodes a sequence $(\rhok: k \geq 0)$ of densities which evolve towards $\rhoinf \equiv q$.  For notational conciseness in this section, we will be using the subscript on $S$ to denote the position of the map in a sequence of maps.  As such, from corollary \cref{corrollary:md_exists}, there exists an $S_1 \in \MD$ for which $S_1 \# \rho_0 = \rho_1$, and more generally, for any $k \geq 0$, there exists an $S_k \in \MD$ for which $S_k \# \rho_{k-1}=\rho_k$.

\begin{lemma}
Define $B: \MD \to \reals$ as
\beqa
B(S)    &\triangleq& \half \E_{\rhokprev}\brackets{\|X-S(X)\|^2} + h \kldist{\rhokprev}{\tilde{p}(\cdot;S)} \nonumber \\
S_k &\triangleq& \argmin_{S \in \MD} B(S)  \label{eqn:defn:SkminB}
\eeqa
Then 
$A(\rho_k)=B(S_k)$ and $S_k \# \rho_{k-1}=\rho_k$.
\end{lemma}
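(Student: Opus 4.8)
The plan is to exploit a change-of-variables identity that makes the objective $B$ over maps coincide with the JKO objective $A$ over densities, and then read off both claims from \cref{theorem:Wasserstein:monotonicDiffeomorphism} together with uniqueness of the $A$-minimizer (established in \citep{Jordan1998}).

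First I would prove the bridging identity: for any $S \in \MD$, the pushforward $\rho := S_\# \rho_{k-1}$ admits a density, and the Jacobian determinants cancel in the likelihood ratio. Indeed, $\tilde p(u;S) = q(S(u))\det(J_S(u)) = \rho_\infty(S(u))\det(J_S(u))$, while \cref{lemma:mon_jacobian_push_forward} applied to $S_\#\rho_{k-1}=\rho$ gives $\rho_{k-1}(u) = \rho(S(u))\det(J_S(u))$; dividing, $\rho_{k-1}(u)/\tilde p(u;S) = \rho(S(u))/\rho_\infty(S(u))$. Taking $\E_{\rho_{k-1}}$ of the log and substituting $Y=S(X)$ (legitimate since $S$ is a monotonic diffeomorphism, so $\rho_{k-1}(u)\,du$ pushes forward to $\rho(y)\,dy$) yields $\kldist{\rho_{k-1}}{\tilde p(\cdot;S)} = \kldist{\rho}{\rho_\infty}$, with all quantities finite since $\kldist{\rho_0}{\rho_\infty}<\infty$ propagates along the sequence. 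Combined with $\E_{\rho_{k-1}}[\|X - S(X)\|^2] \ge d(\rho_{k-1},\rho)^2$ — immediate from \cref{theorem:Wasserstein:monotonicDiffeomorphism}, as $(X,S(X))$ with $X\sim\rho_{k-1}$ is an admissible coupling — this gives $B(S) \ge \tfrac12 d(\rho_{k-1},\rho)^2 + h\,\kldist{\rho}{\rho_\infty} = A(\rho) \ge A(\rho_k)$ for every $S\in\MD$, hence $B(S_k) = \min_{S\in\MD} B(S) \ge A(\rho_k)$.

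For the reverse inequality I would use the optimal transport map: by \cref{theorem:Wasserstein:monotonicDiffeomorphism} there is a unique $\bar S \in \MD$ with $\bar S_\# \rho_{k-1} = \rho_k$ and $\E_{\rho_{k-1}}[\|X-\bar S(X)\|^2] = d(\rho_{k-1},\rho_k)^2$. Applying the identity above with $\rho = \rho_k$ gives $B(\bar S) = \tfrac12 d(\rho_{k-1},\rho_k)^2 + h\,\kldist{\rho_k}{\rho_\infty} = A(\rho_k)$, so $\min_{S\in\MD} B(S) \le A(\rho_k)$. Together with the previous paragraph, $B(S_k) = A(\rho_k)$.

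It remains to show $S_{k\#}\rho_{k-1}=\rho_k$ for the minimizer $S_k$. Setting $\rho^\star := S_{k\#}\rho_{k-1}$, the chain of inequalities from the first step forces $A(\rho^\star) = B(S_k) = A(\rho_k) = \min_\rho A(\rho)$, so $\rho^\star$ is also an $A$-minimizer; since $A$ has a unique minimizer — $\int \rho\log\rho$ is strictly convex along linear interpolations of densities and $\rho\mapsto d(\rho_{k-1},\rho)^2$ is convex there (average optimal couplings), or simply invoke \citep{Jordan1998} — we get $\rho^\star = \rho_k$. I expect the main technical care to lie in the change-of-variables step: verifying that $S_\#\rho_{k-1}$ has a density, that \cref{lemma:mon_jacobian_push_forward} and the substitution $Y=S(X)$ apply because $S$ is a monotonic diffeomorphism, and that the finiteness bookkeeping closes using $\kldist{\rho_0}{\rho_\infty}<\infty$; the rest is a short assembly of \cref{theorem:Wasserstein:monotonicDiffeomorphism}, \cref{lemma:mon_jacobian_push_forward}, and the definition of $d$.
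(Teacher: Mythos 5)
Your proof follows essentially the same route as the paper's: rewrite the relative-entropy term via invariance of KL divergence under the invertible map $S$ so that $B(S)$ becomes $\tfrac12\,\E_{\rho_{k-1}}\brackets{\|X-S(X)\|^2} + h\,\kldist{S_{\#}\rho_{k-1}}{\rho_\infty} \ge A(S_{\#}\rho_{k-1}) \ge A(\rho_k)$, and then achieve equality using the optimal (Brenier) map guaranteed by \cref{theorem:Wasserstein:monotonicDiffeomorphism}. If anything you are slightly more complete than the paper, which stops after exhibiting the equality case and leaves implicit the uniqueness-of-the-$A$-minimizer argument you supply to conclude $S_{k\#}\rho_{k-1}=\rho_k$.
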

\begin{proof}
From  the definition of $\tpSQ$ in \eqref{eqn:defn:arbJacobianEqn_MD} and the
invariance of relative entropy under an invertible transformation, any $S
\in \MD$ satisfies
\[\kldist{\rhokprev}{\tilde{p}(\cdot;S)} =  \kldist{\rhokprev}{S^{-1} \# \rho_\infty} = \kldist{S \#\rhokprev}{\rho_\infty}. \]
As such, moving forward with the proof, we will exploit how $B(S)=\tB(S)$ where
\beqas
\tB(S) \triangleq \half \E_{\rhokprev}\brackets{\|X-S(X)\|^2} + h \kldist{S \# \rhokprev}{\rho_\infty}.
\eeqas
From \cref{theorem:Wasserstein:monotonicDiffeomorphism}, $d(\rhokprev,S\#\rhokprev) \leq \E_{\rhokprev}\brackets{(X-S(X))^2}$ for any $S \in \MD$.  Also, since the relative entropy terms of $\tB(S)$ and $A(S \# \rhokprev)$ are equal, it follows that
$\tB(S) \geq A(S \# \rhokprev)$ for any $S \in \MD$.  Moreover, from \cref{corrollary:md_exists}, we have that there exists an $S_k \in \MD$  for which $S_k \# \rhokprev = \rhok$ and 
\[ \E_{\rhokprev}\brackets{\|X-S_k(X)\|^2} = d(\rhokprev,\rhok)^2.\]
Thus $\tB(S) = A(S \# \rhokprev)$.
\end{proof}

As such, a natural composition of maps underlies how a sample from $P \equiv \rho_0$ gives rise to a sample from $\rho_k$:
\beqa
\rho_k = S_k \# \rho_{k-1} = S_k \circ S_{k-1} \# \rho_{k-2} = S_k \circ \cdots \circ S_1 \# \rho_0
\eeqa

Moreover, since as $h \downarrow 0$, $\rho_k \simeq \rho_{k-1}$ and so $S_k$ approaches the identity map.  Thus for small $h >0$, each $S_k$ should be estimated with reasonable accuracy using lower-order maps.
That is, $S$ can be described as the composition of $T$ maps as 
\begin{equation}
S(x)=S_{T}\circ \ldots \circ S_2 \circ S_1(x) \label{eqn:S:compositions}
\end{equation}
for all $x \in \mathbb{R}^d$, such that each $S_i$ is of relative low-order in the polynomial chaos expansion.

Note that $B(S)$ as written above involves a sum of expectations with respect to $\rhokprev$.  Since our scheme operates sequentially, we have already estimated $S_1,S_2,\ldots,S_{k-1}$ and can generate approximate i.i.d. samples from $\rhokprev$ by first generating $(X_i: i \geq 1)$ i.i.d. from $\rho_0 \equiv p$ and constructing 
\[Z_i = S_{k-1} \circ \cdots \circ S_1(X_i),\quad i \geq 1.\]
We below will demonstrate efficient ways to solve the below convex optimization problem which replaces the expectation with respect to $\rhokprev$ instead with the empirical expectation with respect to $(Z_i: i = 1,\ldots,N)$.
\beqas
\min_{S \in \MD} \frac{1}{N} \sum_{i=1}^N \brackets{ \half \|Z_i - S(Z_i)\|^2 - h\log \tilde{p}(Z_i;S)}
\eeqas

To reiterate, we consider a distribution $\rho_{k-1}$ formed by the sequential
composition of \emph{previous} mappings as
\[\rho_{k-1} = S_{k-1}^* \circ \cdots \circ S^*_1 \# \rho_0,\]
where $\rho_0 \equiv p$. We then try to find a map
$S^*_{k}$ that pushes $\rho_{k-1}$ forward closer to $\rhoinf \equiv Q$.
Each $S_{k}$ is solved by the optimization problem \eqref{eqn:defn:SkminB}, which we term {\bf SOT}.
As the number of compositions $T$ in \eqref{eqn:S:compositions} increases, $\rho_T$ approaches $\rho_\infty$.  When $q$ is uniform log-concave, this greedy, sequential approach still guarantees exponential convergence.  

In the context of Knothe-Rosenblatt maps, for every map in the sequence we can solve the following optimization problem (in the following equation, we will be dropping the subscript $k$ that indicates the sequential map index, as the formulation is not dependent on position in the map sequence, and we will once again be replacing the subscript with $i$ to indicate the distributed variables for the consensus problem instead):
\begin{align}
   &\min_{S_i \in \KRMD} \; \theta||S_i(X_i) - X_i||^2_2 
  - \frac{1}{N} \sum_{i=1}^N \log q(S_i(X_i)) +  \sum_{d=1}^D\log\partial_dS^d(X_i) \label{eqn:stoch_opt_con_kr_seq}\\
        & \text{s.t.} \quad S_i = S, \quad \forall 0 \leq i \leq N \nonumber
\end{align}
where $\theta=h^{-1}$ can be interpreted as an inverse ``step-size'' parameter.

Though each map in the sequence must be calculated \emph{sequentially} after the previous one,
each mapping can still be calculated in the distributed framework described above. 
This implies that at each round, one could \emph{adaptively} decide the parameters for
the next-round's solve.

\subsection{ADMM Formulation for Learning Sequential Maps}
\label{subseca:admm_ot}
We now showcase an ADMM formulation for the optimal transportation-based objective function, similar in spirit to that of \cref{eqn:ADMMfin}.  

We first introduce the following conventions:

\begin{itemize}
\item $\Phi_i^d$ represents the partial derivative of $\Phi_i$ taken with respect to the $d^{th}$ component.  Therefore, $B \Phi_i^d  = \partial_d S(X_i)$, and $\partial_d S^d(X_i)$ is the $d^{th}$ component of $B \Phi_i^d$.
\item $\one_d$ represents a one-hot vector of length $D$ with the one in the $d^{th}$ position
\end{itemize}

We can then introduce a finite-dimensional representation of the transport map, as well as auxiliary variables and a consensus variable to \cref{eqn:stoch_opt_con_kr_seq} and rewrite the problem as:

\begin{align}
\begin{split}
\label{eqn:krOriginalObjectiveWithL2}
&\min_{\{W,p\}_i, \{Y,Z\}^d_i, B} \; \theta||B\Phi_i - x_i||^2_2 + \frac{1}{N}\sum_{i=1}^N -\log q(p_i) \\
&+ \frac{1}{2}\rho||W_i - B||^2_2 + \frac{1}{2}||B\Phi_i - p_i||^2_2 \\ 
&+ \sum_{d=1}^{D} -\log Z^d_i + \frac{1}{2}\rho (Y_i^d \one_d - Z^d_i)^2 + \frac{1}{2}\rho||B \Phi_i^d - Y_i^d||^2_2\\
\text{s.t} \quad & B\Phi_i = p_i \quad\quad\quad\quad \gamma_i \quad (D \times 1) \\ 
& W_i - B = 0 \quad\quad\quad\quad \alpha_i \quad (D \times K) \\
& Y_i^d \one_d = Z^d_i \quad\quad\quad\quad \beta^d_i \quad (1 \times 1) \\
& B \Phi_i^d = Y^d_i \quad\quad\quad\quad \lambda^d_i \quad (D \times 1) \\
& Z_i^d > 0
\end{split}
\end{align}

where we have once again denoted the corresponding Lagrange multipliers to the right of each constraint.  The superscript $d$ notation represents the fact that in this formulation, in addition to having separable variables for each data sample, some variables are now unique to an index over dimension as well.  For example, there are $DN$-many $Z$ variables that must be solved for.  We can now raise the constraints to form the fully-penalized Lagrangian as:

\begin{align}
\begin{split}
& L_{\rho, \theta}(W,Z,Y,p,B;\gamma, \alpha,\beta,\lambda) \\
&=  \; \theta||B\Phi_i - x_i||^2_2 + \frac{1}{N}\sum_{i=1}^N - \log q(p_i) \\
&+ \frac{1}{2}\rho||W_i - B||^2_2 + \frac{1}{2}\rho||B\Phi_i - p_i||^2_2 \\
& + \gamma_i^T(p_i - B\Phi_i) + \textbf{tr}(\alpha_i^T(F_i-B)) \\
& + \sum_{d=1}^{D} -\log Z^d_i + \frac{1}{2}\rho (Y_i^d \one_d - Z^d_i)^2 +\frac{1}{2}\rho||B \Phi_i^d - Y_i^d||^2_2 \\
& + \beta_i^d(Z_i^d - Y_i^d \one_d) + \lambda_i^{dT}(Y_i^d - B\Phi_i^d)
\end{split}
\end{align}

The final ADMM update equations for each variable are once again all closed-form, with the exception of the optimization over $p_i$. For the sake of brevity, we refer the reader to Section \ref{subseca:appendix:b} of the Appendix for the exact update equations. 

However, one notable difference between this formulation and that of Section \ref{subsubseca:distributed_algorithm} as
 noted in the previous section is that the update for $Z_i^d$ has been simplified from requiring an eigenvalue decomposition, to requiring a simple scalar computation, thus significantly reducing computation time, especially in higher dimensions.

\subsection{Scaling Parallelization with GPU Hardware} 
Given the parallelized formulations given above, we implemented our algorithm using the Nvidia CUDA API to get as much performance as possible out of our formulation, and to maximize the problem sizes we could reasonably handle, while keeping computation time as short as possible.  To test the algorithm's parallelizability, we ran our implementation on a single Nvidia GTX 1080ti GPU, as well as on a single p3.16xlarge instance available on Amazon Web Services, which itself contains 8 on-board Tesla V100 GPUs.

For this test, we have sampled synthetic data from a bimodal $P$ distribution specified as a combination of two Gaussian distributions, for a wide range of problem dimensions, specifically $D = 5,10,20,50,100,150,200$, and a constant number of samples from $P$ set to $N=1000$.  We then find a transport pushing $P$ to  $Q = \mathcal{N}(\textbf{0}, \mathbf{I})$, composed of a sequence of 10 individual Knothe-Rosenblatt maps with no mixed multivariate terms.  We then monitor the convergence of dual variables for proper termination of the algorithm.

Figure \ref{fig:gpuComparison} shows the result of this analysis.  The 1 GPU curve corresponds to performance using the single GTX 1080ti, and the AWS curve corresponds to the performance using the 8-GPU system on Amazon Web Services.  The trending of the curves shows that, as expected, as problem dimension increases, a multi-GPU system will continue to maintain reasonable computation times, at least with respect to a single-GPU system, however fewer GPU's will begin to accumulate increasingly high computational costs.  In addition, the parallelizability of our algorithm also has a subtle benefit of helping with memory-usage issues; since we can distribute samples across multiple devices, we can also subsequently distribute all corresponding ADMM variables as well.  Indeed, the single GTX 1080ti ran out of on-board memory roughly around $D=230$, whereas the 8-GPU system can go well beyond that.

\begin{figure*}[ht]
    \centering
    \includegraphics[scale=0.35]{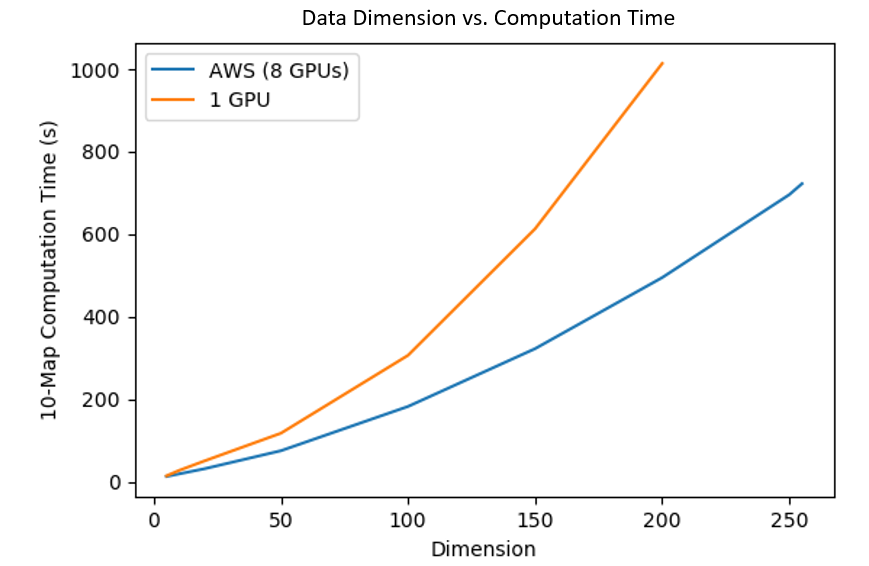}
    \caption[Example MNIST Data samples vs. Randomly Drawn Samples from $P_{digit}$ using $S^{-1}_{(digit)}$]{A comparison of using a single-GPU system vs. an 8-GPU system to compute maps in increasingly high dimension.  The trending of the two plots clearly shows the more reasonable growth in computation time of the 8-GPU system relative to the single-GPU system, as the samples from $P$ are distributed among the multiple devices}
    \label{fig:gpuComparison}
\end{figure*}

\newcommand{\fai}{FA_i}
\newcommand{\fji}{FJ_i}
\newcommand{\neghalf}{-\frac{1}{2}}
\newcommand{\mS}{\hat{S}}
\newcommand{\mSx}[1]{\hat{S}_{#1}(x)}
\section{Applications} 
\label{seca:applications}

The framework presented above is general-purpose, and works to push-forward
a distribution $P$ to a log-concave distribution $Q$. Below we discuss some interesting
applications, namely Bayesian inference and a generative model, and show results with real-world datasets.

\subsection{Bayesian Inference} 
\label{subseca:bayesian_posterior}

A very important instantiation of this framework comes when we consider $P
\equiv P_X$ to represent a prior distribution, and $Q \equiv P_{X|Y=y}$ to be a
Bayesian posterior:
\eq{
  f_{X|Y=y}(x) = \frac{f_{Y|X}(y|x) f_X(x)}{\beta_y} \label{eqn:defn:BayesRule}
}
where $\beta_y$ is a constant that does not vary with $x$, given by:
\eq{
  \beta_y = \int_{v \in \cX} f_{Y|X}(y|v) f_X(v) dv
} 
Using \cref{eqn:defn:JacobianEqn_MD} and combining with Bayes' rule above we can
write:
\eq{
  f_X(x) &= f_{X|Y=y}(S^*_{(y)}(x)) \detSysx \\
         &= \frac{f_{Y|X}(y|S^*_{(y)}(x)) f_X\parenth{S^*_{(y)}(x)}}{\beta_y}  \detSysx 
}

where the notation $S^*_{(y)}(x)$ indicates that the optimal map is found with respect to observations $y$.  We note that since $q(u) = \frac{f_X(u)f_{Y|X}(y|u)}{\beta_y}$, log-concavity of $q$ is equivalent to log-concavity of the prior density $f_X(u)$ and log-concavity of the likelihood density $f_{Y|X}(y|u)$ in $u$: the same criterion for an MAP estimation procedure to be convex.  Thus \cref{corollary:distributed_push_forward} extends to the special case of Bayesian inference; i.e. we can generate i.i.d. samples from the posterior distribution by solving a convex optimization problem in a distributed fashion.

Due to the unique way the ADMM steps were structured, this special case only
requires specifying a particular instance of \cref{eqn:ADMMfin:p}:
\eq{
  p_i^* &= \argmin_{p_i} -\log \underbrace{f_{Y|X}(y|p_i)}_{\text{likelihood}} - \log \underbrace{f_X(p_i)}_{\text{prior}} + \text{pen}(p_i)
}

\begin{remark}
This specific case establishes an important property. If the prior is chosen so
that it is easy to sample from, and the prior and likelihood are both
log-concave, then a deterministic function $S$ can be efficiently computed that
takes I.I.D samples from the prior distribution, and results in I.I.D samples
from the posterior distribution. The assumption of log-concavity is also
typically used in large-scale point estimates, though this framework goes beyond
point estimates and generates I.I.D samples form the posterior.
\end{remark}

As an instantiation of this framework, we consider a Bayesian estimation of regression parameters $x_1,...,x_d$ in the model $y= \mu \bm{1_n} +\Phi x + \epsilon$, where $y$ is the $n$-dimensional vector of responses, $\mu$ is the overall mean, $\Phi$ is a $n \times d$ regressor matrix, and $\epsilon \sim \mathcal{N}(0, \sigma^2)$ is a noise vector. The LASSO solution, 

\begin{equation} \label{eqn:l2l1}
x^*= \argmin_{x \in \mathbb{R}^d} ||y-  \Phi x ||_2^2 + \lambda||x||_1 
\end{equation}  

for some $\lambda \geq 0$ induces sparsity in the latent coefficients. The solution to \eqref{eqn:l2l1} can be seen as a posterior mode estimate when the regression parameters are distributed accordingly to a Laplacian prior.  
\begin{equation}
p(x; \lambda) = \prod_{i=1}^d \frac{\lambda}{2} e^{-\lambda |x_i|}
\end{equation}

A number of  Bayesian LASSO Gibbs samplers, which are Markov Chain Monte Carlo algorithms, are used as standard methods by which to sample from the posterior associated with problem \eqref{eqn:l2l1} \citep{park2008bayesian}, \citep{hans2009bayesian}.  

We study the accuracy and modularity of our measure transport methodology through a Bayesian LASSO analysis of the Boston Housing data set, first analyzed by Harrison and Rubinfeld \citep{harrison1978hedonic}, which is a common dataset used when comparing regression problems.  We compare our results to those obtained from utilizing a corresponding Gibbs sampler.  The Boston Housing data set consists of 13 independent predictors of the median value of owner occupied homes and 506 cases. We are interested in which combination of these 13 variables best predict the median value of homes observed in $y$, and if we can eliminate variables that do not contribute much to prediction.  The LASSO gives an automatic way for feature selection by forcing the coefficients of the predictors represented by $x^*$ to be zero. The Bayesian LASSO solution, allows for uncertainty quantification of feature selection, as we can obtain credible intervals corresponding to the coefficients of the estimates.  

We used a Gibbs sampler as presented in \citep{hans2009bayesian} where the
variance variable $\sigma^2$ is non-random.  We used 3000 samples of burn-in and
sampled 10000 samples from the posterior distribution with a fixed $\lambda$
chosen by minimizing the Bayes Information Criterion (BIC)
\citep{zou2007degrees}.  We compared that to sampling from a generated transport
map with the same $\lambda$.  We used $N=2000$ samples from a Laplace prior to
learn a fourth-order transport map of interest.  In this case, we used a
one-shot, dense map structure as described in Section
\cref{seca:distributed_push_forward}.

We note that the modularity of our problem allows for sampling from the posterior distribution of the Bayesian LASSO, by only specifying the optimization problem of \cref{eqn:ADMMfin:p} to correspond to the likelihood and prior.   

Figure \ref{fig:cred_int_Boston} shows the posterior median estimates and the corresponding 95\% credible intervals for the marginal distributions of the first 10 variables of the Boston housing data set. The LASSO estimates are shown for comparison.   Figure \ref{fig:kde_boston} shows the Kernel Density Estimates for these variables constructed with 10000 samples of either the Bayesian LASSO Gibbs sampler or the measure transported samples.  The density estimates of both methods are similar, verifying the accuracy of our methodology.  

\begin{figure}[!ht]
    \centering
    \includegraphics[scale=.5]{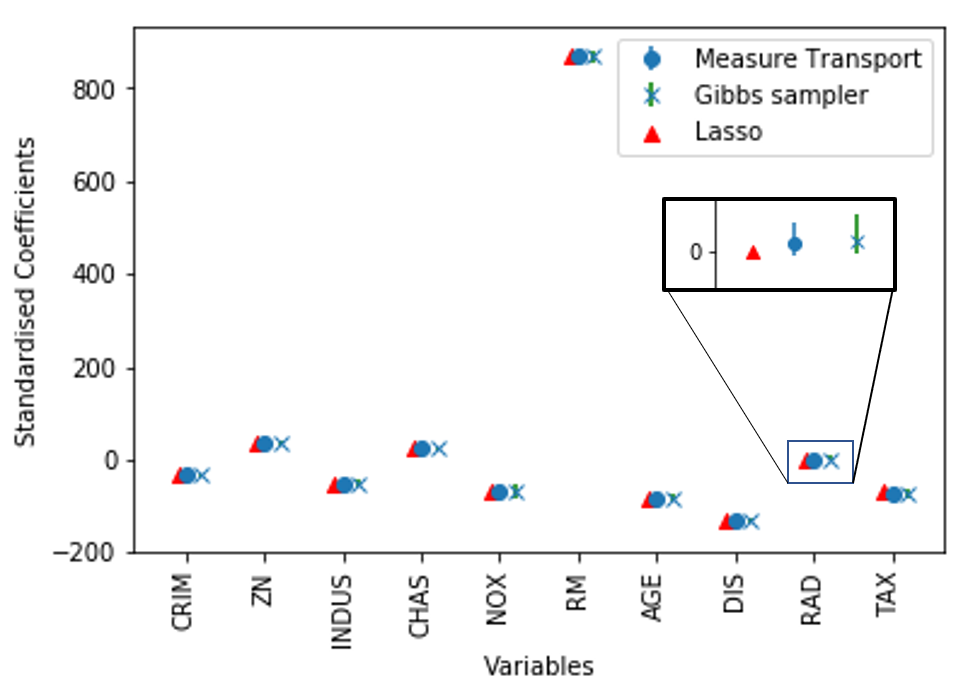}
    \caption{Posterior median Bayesian LASSO estimates and corresponding credible intervals for the ten first variables of the Boston Housing dataset. Median estimates were obtained with samples from a Gibbs sampler and a Measure Transport map. LASSO estimates are shown for comparison.}
    \label{fig:cred_int_Boston}
\end{figure}

\begin{figure}[!ht]
    \centering
    \includegraphics[width=1\textwidth]{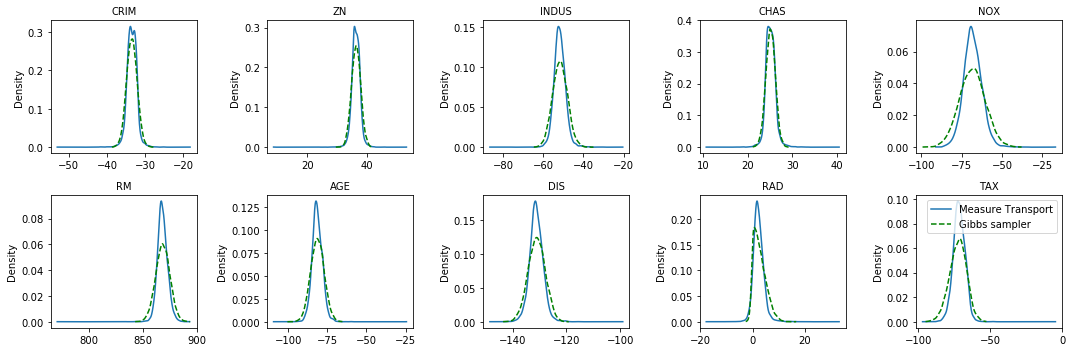}
    \caption{Kernel Density Estimate comparisons of marginal posteriors for the Boston Housing data set.}
    \label{fig:kde_boston}
\end{figure}

\subsection{High-Dimensional Maps Using the MNIST Dataset} 
\label{subsec:high_dim_mnist}
The parallelizability of our formulation of the optimal transportation-based mapping for sequential transport maps also allows us to efficiently compute maps for relatively high-dimensional data.  As a demonstration of this, we used the MNIST handwritten digits dataset \citep{lecun1998gradient} as a subject of experimentation.

Similar to the density estimation case, we assume that samples from each class of MNIST data is drawn from some $P_{digit}$, where $digit$ denotes the MNIST written digit associated with that distribution.  We then attempt to construct a (sequential) mapping, $S_{(digit)}$ that pushes $P_{digit}$ to a reference distribution, $Q = \mathcal{N}(\textbf{0}, \mathbf{I})$.  Again, similar to before, the selection of the $Q$ density to be a standard Gaussian is expressly for the purpose of analytical simplicity; $Q$ can theoretically be anything we like, so it benefits us during the generative step to select $Q$ such that it is easy to sample from. Each image in MNIST is a 28x28 pixel image, therefore after flattening each image into a vector of data, our maps operate in $\mathbf{D=784}$.  We then solve for each map $S_{(digit)}$ for every handwritten digit class in the MNIST set.

We can then treat the inverse map as a generative model; with the maps $S_{(digit)}$ in hand, we can theoretically draw samples from $Q$, and push these samples through the inverse map, $S^{-1}_{(digit)}$, resulting in randomly generated samples from $P_{digit}$.  

\cref{fig:mnist_vs_inverse} shows the result of this process using a sequential composition of 15 maps, with maximum order of the basis of each sequential map being set to 2, and each sequential map using the Knothe-Rosenblatt basis with no mixed multivariate terms from \cref{subseca:structure_of_the_transport_map}.  Our results show that even in high dimension, and even while using a relatively weak polynomial basis per sequential map, the resulting transport maps can effectively generate approximate samples from $P_{digit}$ in this way.  

\begin{figure*}[ht]
    \centering
    \includegraphics[width=12cm]{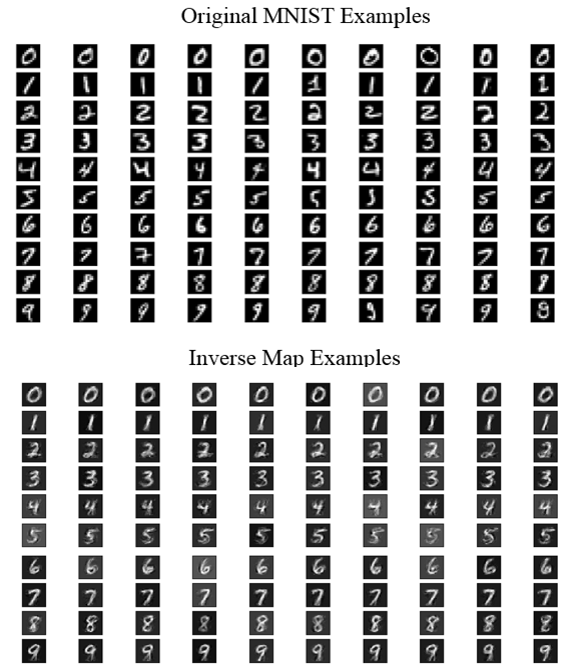}
    \caption[Example MNIST Data samples vs. Randomly Drawn Samples from $P_{digit}$ using $S^{-1}_{(digit)}$]{A comparison of original MNIST data samples vs. random samples drawn using the inverse map.  The left-most 10 columns of images pertain to randomly selected data examples from the original MNIST set, and the rightmost 10 columns of images are randomly generated by the inverse map, $S^{-1}_{(digit)}(X), X \sim Q$.  Each mapping for this example was a sequential composition of 15 maps of maximum order 2, using the Knothe-Rosenblatt mapping with no mixed terms.}
    \label{fig:mnist_vs_inverse}
\end{figure*}

\section{Discussion}
\label{seca:discussion}

In this work, we have proposed a general purpose framework for pushing
independent samples from one distribution $P$ to independent samples from
another distribution $Q$ through the \emph{efficient} and \emph{distributed}
construction of transport maps, with only independent samples from $P$, and knowledge of $Q$ up to a normalization constant. We showed that when the target distribution $Q$ is log-concave, this problem is \emph{convex}. Using ADMM, we instantiated two finite dimensional problems for finding both one-shot and sequential transport maps, and provided distributed algorithms for carrying out
the underlying optimization problems.  As our framework is distributed by nature, we can continue to take advantage of the ever-increasing availability and evolution of distributed computational resources to further speed up computation, with little to no changes to our formulation whatsoever.  

We applied our framework to a Bayesian LASSO problem, that, while it requires that the prior and likelihood to be log-concave,  is no different than existing frameworks that carry
out efficient point estimates in that regard; however, by contrast, our framework does succeed in efficiently
generating \emph{independent} samples from the actual target distribution $Q$. We
emphasize that the class of log-concave distributions is quite large and widely
used in various applications \citep{bagnoli2005log}, and that this is the same
convexity condition required for Bayesian point (MAP) estimation using many modern techniques. As such, we
have shown that from the perspective of convexity, we can go from point
estimation to fully Bayesian estimation, without requiring significantly more.

Finally, we applied our framework to a high-dimensional problem of approximating a generative model for the MNIST dataset, and provided a qualitatively striking demonstration of how well the construction of sequential transport maps can give rise to such a model. The connection and comparison of this method to other generative models, especially deep learning-based methods such as generative adversarial networks \citep{goodfellow2014generative} and variational autoencoders \citep{kingma2013auto}, remains to be explored and is the subject of future work.  We believe that this alternate form of generative model, one based on calculating a transport map that is parameterized over the space of polynomial basis functions orthogonal to the distribution of the data, stands in contrast to the black-box nature of neural networks.  Moreover, although certain works have explored the invertibility of deep neural networks \citep{lipton2017precise}, \citep{gilbert2017towards}, in general a single output of a neural network might map to multiple latent vectors.  Our transport maps, chosen over the space of diffeomorphisms, remain necessarily invertible and indeed this property is exploited in the generation of samples.  One can surmise that this invertibility leads to more tractability of the generative model.  The general connection to Optimal Transport and deep generative models is a subject of recent interest and has incited pertinent work in the literature \citep{genevay2017gan}, \citep{salimans2018improving}.

We also stress that ADMM and other related large-scale optimization methods have
many existing refinements \citep{Jordan1996, Jordan1998,Zhong2014,Azadi2014} from
which this framework would immediately benefit. Future work could explore these
refinements, and applications as approximations to non-convex problems.

Although we have established convexity of these schemes, further work needs to
be done characterizing the fundamental limits of sample complexity of this
approach, and can help guide how these architectures may possibly be soundly
implemented.  Optimizing architectures for hardware optimization, and
understanding performance-energy-complexity trade-offs, will further allow for
wider exploration of these methods within the context of emerging applications. 
\section*{Acknowledgments}
\label{sec:acknowledgments}

The authors would like to thank The Amazon Web Services Cloud Credits for Research Program for their partial and continued funding of this project with respect to cloud compute resources needed to push the current incarnation of the algorithm to the fullest.

The authors would also like to thank Sanggyun Kim, Gabe Schamberg and Alexis
Allegra for their useful discussions and comments. Additionally, the authors
thank the anonymous reviewers whose comments greatly improved the content and
presentation of this material.  
\section*{Appendix}
\label{sec:appendix}
Here we provide some additional details on several aspects of the main paper.

\subsection*{Derivation of Dense ADMM Formulation}
\label{subseca:appendix:a}

Here we show a more complete derivation of the ADMM formulation from \cref{subsubseca:distributed_algorithm}. ADMM yields the following sequential updates to the penalized Lagrangian:

\begin{subequations}
  \label{eqn:ADMM_appendix}
  \newcommand{\goleftstuff}{\!\!\!}
  \begin{align}
    \goleftstuff B^{k+1} &= \argmin_B L_\rho(W^{k},Z^{k},p^{k},B; \gamma^k, \lambda^k,\alpha^k) \label{eqn:ADMM:B} \\
    \goleftstuff W^{k+1}  &= \argmin_W L_\rho(W,Z^k,p^k,B^{k+1}; \gamma^k, \lambda^k,\alpha^k) \label{eqn:ADMM:F} \\
    \goleftstuff Z^{k+1}   &= \argmin_{Z \succ 0}  L_\rho(W^{k+1},Z,p^k,B^{k+1}; \gamma^k, \lambda^k,\alpha^k) \label{eqn:ADMM:Z} \\
    \goleftstuff p^{k+1} &= \argmin_p L(W^{k+1},Z^{k+1},p,B^{k+1}; \gamma^k, \lambda^k,\alpha^k) \label{eqn:ADMM:p} \\
    \goleftstuff \gamma_i^{k+1} &= \gamma_i^k + \rho (p_i^{k+1}-B^{k+1}\Phi_i) \quad 1 \leq i \leq n \label{eqn:ADMM:gamma} \\
    \goleftstuff \lambda_i^{k+1} &= \lambda_i^k + \rho(Z_i^{k+1} - B^{k+1}J_i) \quad 1 \leq i \leq n\label{eqn:ADMM:lambda} \\
    \goleftstuff \alpha_i^{k+1}   &= \alpha_i^k + \rho(W_i^{k+1}-B^{k+1}) \quad 1 \leq i \leq n \label{eqn:ADMM:alpha}
  \end{align}
\end{subequations}

The closed form solutions to the equations \eqref{eqn:ADMM:B},
\eqref{eqn:ADMM:F}, and \eqref{eqn:ADMM:Z} are given as follows:

Firstly, as for \eqref{eqn:ADMM:B}, the cost function $C(B^{k+1})$ is given by:
\begin{align}
C(B^{k+1}) &=  \frac{1}{N} \sum_{i=1}^N \half \rho \|W^{k}_i -B \|_F^2  + \half \rho \| B\Phi_i - p^{k}_i\|_2^2 \nonumber \\
    &+ \frac{1}{N} \sum_{i=1}^N \  \half \rho \| BJ_i - Z^{k}_i\|_F^2 + \gamma_i^{kT}(p^{k}_i-B\Phi_i ) \nonumber \\
 &+ \frac{1}{N} \sum_{i=1}^N   \tr \parenth{\lambda_i^{kT} (Z^{k}_i - BJ_i)}+ \tr \parenth{\alpha_i^{kT} (W^{k}_i -
 B)}.\label{eqn:ADMMb:costB}
\end{align}

The first-order derivative of the equation (\ref{eqn:ADMMb:costB})
in terms of $B^{k+1}$ is expressed as

\begin{align}
\frac{\partial C(B^{k+1})}{\partial B^{k+1}} &=  \frac{1}{N} \sum_{i=1}^N \rho (B-W^{k}_i)  + \rho (B\Phi_i - p^{k}_i)\Phi_i^{T} \nonumber \\
    &+ \frac{1}{N} \sum_{i=1}^N \rho ( BJ_i - Z^{k}_i )J_i^{T} - \gamma_i^{k}\Phi_i^{T} \nonumber \\
 &+ \frac{1}{N} \sum_{i=1}^N   -\lambda_i^{k}J_i - \alpha_i^{kT}.\label{eqn:ADMMb:derivB}
\end{align}

By setting the equation (\ref{eqn:ADMMb:derivB}) to zero and
expressing it in terms of $B$, we get
\begin{align}
&B\left[\rho \left ( I + \frac{1}{N}\sum_{i=1}^{N} \Phi_i \Phi_i^{T} +
J_i J_i^{T} \right )\right ]
\nonumber\\
&=\frac{1}{N}\sum_{i=1}^{N}\left[ \rho\left(W_i^k + p^{k}_i
\Phi_i^{T}+Z_i^{k}J_i^{T}\right)+\gamma_i^{k}\Phi_i^{T}+\lambda_{i}^{k}J_i^{T}+\alpha_{i}^{k}\right].
\end{align}

If we define
\eqn{
  L \triangleq \left[\rho \left ( I + \frac{1}{N}\sum_{i=1}^{N} \Phi_i \Phi_i^{T} + J_i J_i^{T} \right )\right ]
}

and 
\eqn{
M \triangleq &\frac{1}{N}\sum_{i=1}^{N}\left[ \rho\left(W_i^k + p^{k}_i
\Phi_i^{T}+Z_i^{k}J_i^{T}\right)+\gamma_i^{k}\Phi_i^{T}+\lambda_{i}^{k}J_i^{T}+\alpha_{i}^{k}\right]
}

Then we have:
\begin{align}
&B^{k+1}= M \cdot L^{-1}
\end{align}

Secondly, as for (\ref{eqn:ADMM:F}), the cost function
$C(W_i^{k+1})$ is given by

\begin{align}
C(W_i^{k+1}) =\half \rho \|W_i -B^{k+1} \|_2^2  + \tr
\parenth{\alpha^{kT}_i (W_i - B^{k+1})} \label{eqn:ADMMb:costF}
\end{align}

The first-order derivative of the equation (\ref{eqn:ADMMb:costF})
in terms of $W_i^{k+1}$ is expressed as

\begin{align}
\frac{\partial C(W_i^{k+1})}{\partial W_i^{k+1}}=\rho (W_i
-B^{k+1}) + \alpha^{k}_i .\label{eqn:ADMMb:derivF}
\end{align}
Thus,

\begin{align}
W_i^{k+1}=-\frac{1}{\rho} \alpha^{k}_i+B^{k+1}
\end{align}

Lastly, as for (\ref{eqn:ADMM:Z}), following the steps in
\citep{boyd2011distributed}, the first-order optimality condition using the
equation (\ref{eqn:ADMM:Z}) is expressed as

\begin{eqnarray}
-Z_i^{-1}+\rho(Z_i-B^{k+1}J_i)+\lambda_i^{k}=0.
\end{eqnarray}

Rewriting this, we get

\begin{eqnarray}
\rho Z_i - Z_i^{-1} = \rho B^{k+1}J_i - \lambda_i^{k}.
\label{eqn:firstopt}
\end{eqnarray}

First, take the orthogonal eigenvalue decomposition of the
right-hand side,

\begin{eqnarray}
\rho B^{k+1}J_i - \lambda_i^{k}=Q\Lambda Q^T \label{eqn:eigdec}
\end{eqnarray}
where $\Lambda=\textbf{diag}(\nu_1,...,\nu_d)$, and $Q^T Q =Q Q^T
= I$. Multiplying (\ref{eqn:firstopt}) by $Q^T$ on the left and by
$Q$ on the right gives

\begin{eqnarray}
\rho \tilde{Z}_i - \tilde{Z}_i^{-1} = \Lambda
\end{eqnarray}
where $\tilde{Z}_i=Q^T Z_i Q$. A diagonal solution of this
equation is given by

\begin{eqnarray}
\tilde{Z}_{i,(jj)} = \frac{\nu_j + \sqrt{\nu_j^2+4\rho}}{2\rho},
\end{eqnarray}
and the final solution is given as

\begin{eqnarray}
Z_i^{k+1} = Q\tilde{Z}_{i}Q^T.
\end{eqnarray}

\subsection*{Derivation of Knothe-Rosenblatt ADMM Formulation and Final Updates}
\label{subseca:appendix:b}
In similar fashion, here we outline the derivation of the ADMM formulation from \cref{subseca:admm_ot}.

First, we note that the closed-form updates for $W_i$ and $p_i$ are identical as for the original formulation.  So here we will show the derivation only for the remainder of updates. In what follows, ADMM iteration superscripts, $k$, are now enclosed in parentheses so as not to confuse them with the $d$ superscript indexing over dimension:

The cost function $C(B^{(k+1)})$ is given by:

\begin{align}
\begin{split}
C(B^{(k+1)}) &= \frac{1}{N}\sum_{i=1}^N \frac{1}{2}\rho||W_i^{(k)} - B||^2_2 + \theta||B\Phi_i - X_i||^2_2 \\
&+ \frac{1}{2}\rho||B\Phi_i - p_i^{(k)}||^2_2 + \gamma_i^{(k)T}(p_i^{(k)} - B\Phi_i) + \\
&+ \textbf{tr}(\alpha_i^{(k)T}(W_i^{(k)} - B)) \\
& + \sum_{d=1}^D \frac{1}{2}\rho||B\Phi_i^d - Y_i^{d(k)}||^2_2 + \lambda_i^{d(k)T}(Y_i^{d(k)} - B\Phi_i^d)
\end{split}
\label{eqn:ADMM_kr:B_cost}
\end{align}

Taking the first-order derivative of \cref{eqn:ADMM_kr:B_cost} and setting to 0, we arrive at the following expression:

\begin{align}
\begin{split}
& B\left[\rho(\textbf{I} + \frac{1}{N}\sum_{i=1}^N \Phi_i \Phi_i^T + \frac{2 \theta}{\rho} \Phi_i \Phi_i^T +  \sum_{d=1}^D \Phi_i^d \Phi_i^{dT})\right] \\
&= \frac{1}{N}\sum_{i=1}^N \rho W_i^{(k)} + \rho p_i^{(k)} \Phi_i^T + 2 \theta X_i \Phi_i^T + \gamma_i^{(k)} \Phi_i^T + \alpha_i^{(k)T} \\
&+ \sum_{d=1}^D \rho Y_i^{d(k)} \Phi_i^{dT} + \lambda_i^{d(k)} \Phi_i^{dT} 
\end{split}
\end{align}

If we define
\eqn{
  \cB_{s} \triangleq \rho\left(\textbf{I} + \frac{1}{N}\sum_{i=1}^N \Phi_i \Phi_i^T + \frac{2 \theta}{\rho} \Phi_i \Phi_i^T +  \sum_{d=1}^D \Phi_i^d \Phi_i^{dT}\right)
}

and

\eqn{
  \cB_i &\triangleq \frac{1}{N}\sum_{i=1}^N \rho W_i^{(k)} + \rho p_i^{(k)} \Phi_i^T + 2 \theta X_i \Phi_i^T + \gamma_i^{(k)} \Phi_i^T + \alpha_i^{(k)T} \\
&+ \sum_{d=1}^D \rho Y_i^{d(k)} \Phi_i^{dT} + \lambda_i^{d(k)} \Phi_i^{dT} 
}

then we have:
\begin{align}
\begin{split}
& B^{(k+1)} = \cB_i \cdot \cB_{s}^{-1}
\end{split}
\end{align}

The loss function associated with $Z_i^d$ for a given $i$ and $d$ is the following:

\begin{align}
\begin{split}
C(Z_i^{d(k+1)}) &= -\log Z_i^d + \frac{1}{2}\rho(Y_i^{d(k)} \one_d - Z_i^d)^2 \\
&+\beta_i^{d(k)}(Z_i^d - Y_i^{d(k)} \one_d) \nonumber
\end{split}
\end{align}

Taking the derivative and setting to 0, we get the following quadratic expression:

\begin{equation}
\rho Z_i^{d2} + (\beta_i^{d(k)} - \rho Y_i^{d(k)} \one_d)Z_i^d - 1 = 0
\end{equation}

As we would like $Z_i^{d(k+1)}$ to be greater than 0 according to our constraints, we set the closed-form solution to the positive root of this quadratic equation:

\begin{equation}
Z_i^{d(k+1)} = \frac{\rho Y_i^{d(k)} \one_d - \beta_i^{d(k)} + \sqrt{(\rho Y_i^{d(k)} \one_d - \beta_i^{d(k)})^2 + 4 \rho}}{2 \rho}
\end{equation}

The loss function associated with $Y_i^d$ for a given $i$ and $d$ is the following:

\begin{align}
\begin{split}
C(Y_i^{d(k+1)}) &= \frac{1}{2}\rho(Y_i^d \one_d - Z_i^{d(k+1)})^2 + \frac{1}{2}\rho ||B^{(k+1)}\Phi_i^d - Y_i^d||^2_2 \\
& + \beta_i^{d(k)}(Z_i^{d(k+1)} - Y_i^d \one_d) + \lambda_i^{d(k)T}(Y_i^d - B^{(k+1)}\Phi_i^d)
\end{split}
\end{align}

Taking the derivative with respect to $Y_i^d$ and setting to 0, we get the following expression:

\begin{align}
Y_i^{d(k+1)} &= (\rho Z_i^{d(k+1)} \one_d^T + \rho B^{(k+1)} \Phi_i^d + \beta_i^{d(k)} \one_d^T - \lambda_i^{d(k)T}) \\
& \cdot (\rho \one_d \one_d^T + \rho \textbf{I})^{-1} \nonumber
\end{align}

Finally, our complete set of updates is:

\begin{subequations} 
  \label{eqn:ADMMfin_kr_ot}
  \begin{align}
    B^{(k+1)}         &= \cB_i \cdot \cB_{s} \label{eqn:ADMMfin_kr_ot:B} \\  
    W_i^{(k+1)}       &= -\frac{1}{\rho} \alpha^{(k)}_i+B^{(k+1)} \label{eqn:ADMMfin_kr_ot:F}  \\
    Z_i^{d(k+1)}       &= \frac{\rho Y_i^{d(k)} \one_d - \beta_i^{d(k)} + \sqrt{(\rho Y_i^{d(k)} \one_d - \beta_i^{d(k)})^2 + 4 \rho}}{2 \rho} \\
    Y_i^{d(k+1)}      &= (\rho Z_i^{d(k+1)} \one_d^T + \rho B^{(k+1)} \Phi_i^d + \beta_i^{d(k)} \one_d^T - \lambda_i^{d(k)T})\\
    &\cdot (\rho \one_d \one_d^T + \rho \textbf{I})^{-1} \nonumber\\
    \gamma_i^{(k+1)}  &= \gamma_i^{(k)} + \rho (p_i^{(k+1)}-B^{(k+1)}\Phi_i) \label{eqn:ADMMfin_kr_ot:gamma} \\
    \alpha_i^{(k+1)}  &= \alpha_i^{(k)}+ \rho(W_i^{(k+1)}-B^{(k+1)}) \label{eqn:ADMMfin_kr_ot:alpha} \\
    \lambda_i^{d(k+1)} &= \lambda_i^{d(k)} + \rho(Y_i^{d(k+1)} - B^{(k+1)} \Phi_i^d) \label{eqn:ADMMfin_kr_ot:lambda} \\
    \beta_i^{d(k+1)} &= \beta_i^{d(k)} + \rho(Z_i^{d(k+1)} - Y_i^{d(k+1)} \one_d) \label{eqn:ADMMfin_kr_ot:beta} \\
    p_i^{(k+1)}       &= \argmin_{p_i} - \log q(p_i) + \textrm{pen}(p_i) \label{eqn:ADMMfin_kr_ot:p}
  \end{align}
\end{subequations}

where the $p_i$ update can once again be performed using any number of appropriate optimization techniques.

\subsection*{Transport Map Multi-Indices Details}
\label{subseca:appendix:c}
In this section, we give a few concrete examples of the various multi-index-sets presented in \cref{subseca:structure_of_the_transport_map} for clarification in practical use-cases, as well as for actual implementation purposes.  

In the case of a dense map, recall the index set:

\begin{align*}
  \cJ^{D} = \braces{\vecj \in \bN^d : \sum^d_{i=1}j_i \leq O }
\end{align*}

For example, in the case where $D=O=3$, the resulting index set will have the following form:
\[
  \cJ^{D} = 
  \left[ \begin{smallmatrix}
    0 & 1 & 2 & 3 & 0 & 0 & 0 & 1 & 1 & 2 & 0 & 0 & 0 & 1 & 1 & 2 & 0 & 0 & 1 & 0 \\
    0 & 0 & 0 & 0 & 1 & 2 & 3 & 1 & 2 & 1 & 0 & 1 & 2 & 0 & 1 & 0 & 0 & 1 & 0 & 0 \\
    0 & 0 & 0 & 0 & 0 & 0 & 0 & 0 & 0 & 0 & 1 & 1 & 1 & 1 & 1 & 1 & 2 & 2 & 2 & 3 
  \end{smallmatrix} \right]
\]

where every $\mathbf{j}^{th}$ column is one $D$-long multi-index for a single multivariate polynomial basis term, $\phi_\mathbf{j}$.  

The size of this set $K \triangleq |\cJ^D|$ for any given maximum polynomial order
$O$ is:
\begin{align*}
  K = \binom{D+O}{O}
\end{align*}

In the case of the Total Order Knothe-Rosenblatt map, the index set is:

\eq{
    &\cJ^{KR}_d = \\
    & \braces{ \vecj \in \bN^d : \sum^d_{i=1}j_i \leq O \wedge  j_i = 0, \forall i > d}, d=1,\ldots,D
}

In this case, the size of the set $K_d \triangleq |\cJ^{KR}_d|$ becomes dependent on the component of the mapping.  

Revisiting our previous example with $D=O=3$ we have:
\eq{
  \cJ_1^{KR} &= \braces{ \vecj \in \bN^3 : \sum_{i=1}^{3} j_i \leq O \wedge j_2 = j_3 = 0} \\
  &= \left[ 
  \begin{smallmatrix}
    0 & 1 & 2 & 3 \\
    0 & 0 & 0 & 0 \\
    0 & 0 & 0 & 0 
  \end{smallmatrix}\right] \\
    \cJ_2^{KR} &= \braces{ \vecj \in \bN^3 : \sum_{i=1}^{3} j_i \leq O \wedge j_3 = 0} \\
  &= \left[ 
  \begin{smallmatrix}
    0 & 1 & 2 & 3 & 0 & 0 & 0 & 1 & 1 & 2\\
    0 & 0 & 0 & 0 & 1 & 2 & 3 & 1 & 2 & 1\\
    0 & 0 & 0 & 0 & 0 & 0 & 0 & 0 & 0 & 0
  \end{smallmatrix}\right] \\
  \cJ_3^{KR} &= \braces{ \vecj \in \bN^3 : \sum_{i=1}^{3} j_i \leq O} \\
  &= \left[ \begin{smallmatrix}
    0 & 1 & 2 & 3 & 0 & 0 & 0 & 1 & 1 & 2 & 0 & 0 & 0 & 1 & 1 & 2 & 0 & 0 & 1 & 0 \\
    0 & 0 & 0 & 0 & 1 & 2 & 3 & 1 & 2 & 1 & 0 & 1 & 2 & 0 & 1 & 0 & 0 & 1 & 0 & 0 \\
    0 & 0 & 0 & 0 & 0 & 0 & 0 & 0 & 0 & 0 & 1 & 1 & 1 & 1 & 1 & 1 & 2 & 2 & 2 & 3 
  \end{smallmatrix} \right]
} 

In contrast to a dense mapping, this construction yields a weight matrix that has
\begin{align}
  |\cJ^{KR}_d| = \binom{d+O}{O}
\end{align}
many non-zero weights per row $d$, for a total of:
\begin{align}
  \sum_{d=1}^D \binom{d+O}{O}
\end{align}
non-zero weights. In terms of implementation, note that we can enforce a lower-triangular structure of the mapping simply by constructing $\Phi$ according to the full index set ordering of $\cJ^{KR}_D$, and constraining the coefficient matrix $W$ to have zeros embedded with the following structure:

\begin{definition}[Lower-Triangular Weight Matrix]
  A weight matrix $W \in \bR^{D \times K}$ corresponds to a lower-triangular
  transport map if it can be expressed as:
  \begin{align*}
    W = 
    \begin{bmatrix}
          \weightRow_1^T & 0 & 0 & 0 & 0 & 0 & 0 \\
          & \ldots & \weightRow_d^T & \ldots & 0 & 0 & 0 \\
          & & \ldots & \weightRow_D^T & \ldots &  &   \\
    \end{bmatrix} 
  \end{align*}
  where each $\weightRow_d$ is a vector in $\bR^{|\cJ^{KR}_d|}$. 
\end{definition}

When constructed as such, $W\Phi_i = S(X_i)$, where $S$ is a Knothe-Rosenblatt map.

In the case of the Single Univariate Knothe-Rosenblatt map, the index set becomes the following subset of $\cJ^{KR}$, again dependent on the component $d$:

\eq{
    &\cJ^{KRSV}_d = \\
    &\braces{ \vecj \in \bN^d : \sum^d_{i=1}j_i \leq O \wedge j_i j_l = 0, \forall i\neq l \wedge j_i = 0, \forall i > d}, \\
    &d=1,\ldots,D
}

Revisiting our previous example with $D = O = 3$, we have the following multi-index sets:

\eq{
  \cJ_1^{KRSV} &= \braces{ \vecj \in \bN^3 : \sum_{i=1}^{3} j_i \leq O \wedge j_2 = j_3 = 0 \wedge j_i j_l = 0, \forall i\neq l } \\
  &= \left[ 
  \begin{smallmatrix}
    0 & 1 & 2 & 3 \\
    0 & 0 & 0 & 0 \\
    0 & 0 & 0 & 0 
  \end{smallmatrix}\right] \\
    \cJ_2^{KRSV} &= \braces{ \vecj \in \bN^3 : \sum_{i=1}^{3} j_i \leq O \wedge j_3 = 0 \wedge j_i j_l = 0, \forall i\neq l } \\
  &= \left[ 
  \begin{smallmatrix}
    0 & 1 & 2 & 3 & 0 & 0 & 0\\
    0 & 0 & 0 & 0 & 1 & 2 & 3\\
    0 & 0 & 0 & 0 & 0 & 0 & 0
  \end{smallmatrix}\right] \\
  \cJ_3^{KRSV} &= \braces{ \vecj \in \bN^3 : \sum_{i=1}^{3} j_i \leq O \wedge j_i j_l = 0, \forall i\neq l } \\
  &= \left[ \begin{smallmatrix}
    0 & 1 & 2 & 3 & 0 & 0 & 0 & 0 & 0 & 0 \\
    0 & 0 & 0 & 0 & 1 & 2 & 3 & 0 & 0 & 0 \\
    0 & 0 & 0 & 0 & 0 & 0 & 0 & 1 & 2 & 3 
  \end{smallmatrix} \right]
} 

Here, all multivariate polynomial basis terms that are a product of mixed univariate polynomial terms are eliminated from the basis, resulting in a weight matrix that has:
\begin{align}
  |\cJ^{KRSV}_d| = dO + 1
\end{align}
many non-zero weights per row $d$, for a total of:
\begin{align}
  \sum_{d=1}^D dO + 1
\end{align}

non-zero weights. In terms of implementation, the 0-embedding strategy from the Total Order Knothe-Rosenblatt mapping still applies, as long as the complete index set is constructed as $\cJ^{KRSV}_D$.

\subsection*{Ensuring Diffeomorphism Properties of Parameterized Maps}
\label{subseca:appendix:cc}
For any $\tilde{S} \in \MD$ parameterized as in \cref{subseca:transport_map_parameterization}
\begin{equation}
\tilde{S}_K(x)= W\Phi(x)
\end{equation}

We must ensure that $WJ_{\Phi}(x)$ is positive definite for all $\vecx \in \aW$.  Here we will define an additional optimization problem to ensure this property.  We begin with the Euclidean Projection or the Proximal Operator of the indicator function of $\MD$. 

\begin{align}
S_W(x)= \argmin_{m(x)=W\Phi(\vecx) : J_{\Phi}(\vecx) \geq 0 } ||m(x) -W\Phi (\vecx )||^2 
\end{align}
As such, $S_W$ retains the properties of a diffeomorphism. 

\subsection*{Inverse Map Details}
\label{subseca:appendix:d}
Computing the inverse map also becomes straightforward given the above methodology of representing $B$ and $\Phi_i$

We begin by first showing the Knothe-Rosenblatt property of the map in the complete forward-map equation assuming we are using our polynomial basis representation for a given $X_i$:
\begingroup\makeatletter\def\f@size{5}\check@mathfonts
\begin{align}
	&\underbrace{
	\begin{bmatrix}
		b_{11} & b_{12} & \ldots & b_{1(K_1)} & \ldots & 0 & 0 & 0 \\
        b_{21} & b_{22} & \ldots & \ldots & b_{2(K_2)} & \ldots & 0 & 0 \\
         \vdots \\
         b_{D1} & b_{D2} & \ldots & \ldots & \ldots & \ldots & \ldots & b_{D(K_D)}\\
	\end{bmatrix}}_{B}
    \underbrace{\begin{bmatrix}
    \vertbar \\
    \Phi(X_i^1) \\
    \vertbar \\
    \vertbar \\
    \Phi(X_i^1, X_i^2)\\
    \vertbar \\
    \vdots\\
    \vertbar \\
    \Phi(X_i^1,\ldots,X_i^D)\\
    \vertbar
    \end{bmatrix}
    }_{\Phi_i} \nonumber \\
    &= 
    \begin{bmatrix}
    S(X_i^1)\\
    S(X_i^2)\\
    \vdots \\
    S(X_i^D)
    \end{bmatrix}
\end{align}\endgroup
where $X_i^d$ represents the $d^{th}$ component of the $i^{th}$ sample.  

Here, to fulfill our KR assumption, we assume that $\Phi_i$ is a column vector of the polynomial bases evaluated at $X_i$, ordered according to how many components of $X_i$ the bases are a function of.  I.e., if $K_d = |\cJ^{KR}_d|$, then $\Phi(X_i^1)$ are the first $K_1$ basis functions that are only a function of $X_1$, $\Phi(X_i^1, X_i^2)$ are the $K_2 - K_1$ basis functions that are only a function of $X_1$ and $X_2$, and so on. As such, as only the first $K_d$ elements of every $d^{th}$ row of $B$ are (potentially) non-zero, the map should have the appropriate Knothe-Rosenblatt structure by construction.

In the case where we want to invert a sample $S(X_i)$, this defines a system of equations that can be solved row by row for each component of the solution, $S(X_i^d)$, in the form of a polynomial root-finding problem for each row.  For example, we first solve for $X_i^1$, the solution of which we can call $X_i^{1*}$ by finding the (single variable) root of:

\begin{equation}
\begin{bmatrix}
		b_{11} & b_{12} & \ldots & b_{1(K_1)}
	\end{bmatrix}
    \begin{bmatrix}
    \vertbar \\
    \Phi(X_i^1) \\
    \vertbar \\
    \end{bmatrix}
    =
    S(X_i^1)
\end{equation}

Subsequently, we can solve for $X_i^2$ plugging $X_i^{1*}$ into the second equation:

\begin{equation}
\begin{bmatrix}
		b_{21} & b_{22} & \ldots & \ldots & b_{2(K_2)}
	\end{bmatrix}
    \begin{bmatrix}
    \vertbar \\
    \Phi(X_i^{1*}) \\
    \vertbar \\
    \vertbar \\
    \Phi(X_i^{1*}, X_i^2)\\
    \vertbar \\
    \end{bmatrix}
    =
    S(X_i^2)
\end{equation}

and so on.  Note that this results in $D$-many single variable root-finding problems per sample to invert, and the order of the polynomial that must be solved for will be equal to the order of the polynomial chosen to represent the basis. 
 
\bibliographystyle{apacite}
\bibliography{admm_journal_references}

\end{document}